\providecommand{\DontPrintSemicolon}{\dontprintsemicolon}
\newcommand{\argmin}{\operatornamewithlimits{argmin}}
\newcommand{\argmax}{\operatornamewithlimits{argmax}}
\newcommand{\bE}{\mathbb{E}}
\newcommand{\bI}{\mathbb{I}}
\newcommand{\ba}{\mathbf{a}}
\newcommand{\bpi}{\mathbf{\pi}}
\newcommand{\bone}{\mathbf{1}}
\newcommand{\bx}{\mathbf{x}}
\newcommand{\by}{\mathbf{y}}
\newcommand{\bw}{\mathbf{w}}
\newcommand{\bv}{\mathbf{v}}
\newcommand{\btheta}{\boldsymbol\theta}
\newcommand{\cA}{\mathcal{A}}
\newcommand{\cB}{\mathcal{B}}
\newcommand{\cD}{\mathcal{D}}
\newcommand{\cH}{\mathcal{H}}
\newcommand{\cN}{\mathcal{N}}
\newcommand{\cO}{\mathcal{O}}
\newcommand{\cS}{\mathcal{S}}
\newcommand{\cT}{\mathcal{T}}
\newcommand{\cZ}{\mathcal{Z}}
\newcommand{\defword}[1]{\textbf{\boldmath{#1}}}
\newcommand{\ie}{{\it i.e.}~}
\newcommand{\eg}{{\it e.g.}~}
\newtheorem{definition}{Definition}
\newtheorem{theorem}{Theorem}
\newtheorem{lemma}{Lemma}
\definecolor{darkgreen}{RGB}{0,125,0}
\newcounter{mlNoteCounter}
\newcounter{vzNoteCounter}
\title{Actor-Critic Policy Optimization in Partially Observable Multiagent  Environments}
\author{Sriram Srinivasan$^{*,1}$\\{\tt srsrinivasan@} \And Marc Lanctot$^{*,1}$\\{\tt lanctot@} \And Vinicius Zambaldi$^1$\\{\tt vzambaldi@} \And Julien P\'{e}rolat$^1$\\{\tt perolat@} \AND
Karl Tuyls$^1$\\{\tt karltuyls@} \And R\'{e}mi Munos$^1$\\{\tt munos@} \And Michael Bowling$^1$\\{\tt bowlingm@}\\
\vspace{0.1cm}\\
\hspace{-8cm}{\tt ...@google.com}. $^1$DeepMind. $^*$These authors contributed equally.
}
\begin{document}
% \nipsfinalcopy is no longer used

\maketitle

\begin{abstract}
Optimization of parameterized policies for reinforcement learning (RL) is an important
and challenging problem in artificial intelligence. Among the most common approaches are
algorithms based on gradient ascent of a score function representing discounted return.
In this paper, we examine the role of these policy gradient and actor-critic algorithms in partially-observable multiagent environments. We show several candidate policy update rules and relate them to a foundation of regret minimization and multiagent learning techniques for the one-shot and tabular cases. We apply our method to {\it model-free} multiagent reinforcement learning in adversarial sequential decision problems (zero-sum imperfect information games),
using RL-style function approximation.
We evaluate on commonly used benchmark Poker domains, showing performance against fixed policies and empirical convergence to approximate Nash equilibria in self-play with rates similar to or better than a baseline model-free algorithm for zero-sum games, without any domain-specific state space reductions.
\end{abstract}

\section{Introduction}

% \mlnote{This still needs some work but gives the basic idea of the story + content.}

% Outline:
%   1. Success of Deep RL and policy gradients
%   2. Multiagent setting is more difficult because the foundations are not as solid.
%   3. Despite this, progress is being made and there are baselines now.
%   4. The problem is that they have no or weak convergence guarantees, especially for the online and model-free case.
%   5. The purely online, model-free case is hard and there is only one known tabular algorithm.
%   6. In this paper, we suggest a new form of policy gradient that can do it all and show it's essentially on-policy MCCFR.

There has been much success in learning parameterized policies for sequential
decision-making problems. One paradigm driving progress is deep reinforcement learning (Deep RL),
which uses deep learning~\cite{LeCun15} to train function approximators that represent policies,
reward estimates, or both, to learn directly from experience and rewards~\cite{Sutton18}.
These techniques have learned to play Atari games beyond human-level~\cite{Mnih15DQN},
Go, chess, and shogi from scratch~\cite{Silver17AGZ,Silver17AChess}, complex behaviors in
3D environments~\cite{Mnih2016asynchronous,WuTian17,Jaderberg17UNREAL}, robotics~\cite{Gu16, Quillen18}, character animation~\cite{Peng18},
among others.

When multiple agents learn simultaneously, policy optimization becomes more complex.
First, each agent's environment is non-stationary and
naive approaches can be non-Markovian~\cite{Matignon12Independent}, violating the requirements of many
traditional RL algorithms. Second, the optimization problem is not as clearly defined as
maximizing one's own expected reward, because each agent's policy affects the others' optimization problems.
Consequently, game-theoretic formalisms are often used as the basis for representing interactions and
decision-making in multiagent systems~\cite{Busoniu08Comprehensive,Shoham09,Nowe12Game}.

Computer poker is a common multiagent benchmark domain. The presence of partial
observability poses a challenge for traditional RL techniques that exploit the Markov property.
Nonetheless, there has been steady progress in poker AI.
Near-optimal solutions for heads-up limit
Texas Hold'em were found with tabular methods using state aggregation, powered by policy
iteration algorithms based on
regret minimization~\cite{CFR,Tammelin15CFRPlus,Bowling15Poker}. These approaches were founded on a basis of counterfactual regret minimization (CFR), which is the root of recent advances in no-limit, such as Libratus~\cite{Brown17Libratus} and DeepStack~\cite{Moravcik17DeepStack}.
However, (i) both required Poker-specific domain knowledge, and (ii) neither were model-free, and hence are
unable to learn directly from experience, without look-ahead search using a perfect model of the environment.

In this paper, we study the problem of multiagent reinforcement learning in adversarial games with
partial observability, with a focus on the model-free case where agents (a) do not have a perfect description
of their environment (and hence cannot do a priori planning), (b) learn purely from their own experience without
explicitly modeling the environment or other players. We show that actor-critics are related to regret minimization and propose several policy update rules inspired by this connection. We then analyze the convergence properties and present experimental results.

\section{Background and Related Work}

In this section, we briefly describe the necessary background. While we draw on game-theoretic formalisms,
we choose to align our terminology with the RL literature to emphasize the setting and motivations. We include
clarifications in Appendix~\ref{sec:terminology}. For details, see~\cite{Shoham09,Sutton18}.

\subsection{Reinforcement Learning and Policy Gradient Algorithms}

% Some basic intro on RL. Tabular methods, function approximation.
An agent acts by taking actions $a \in \cA$ in states $s \in \cS$ from their policy $\pi : s \rightarrow \Delta(\cA)$, where $\Delta(X)$ is the set of probability distributions over $X$, which results in changing the state of the
environment $s_{t+1} \sim \cT(s_t, a_t)$; the agent then receives an observation $o(s_t, a_t, s_{t+1}) \in \Omega$ and reward $R_t$.\footnote{Note that in fully-observable settings, $o(s_t, a_t, s_{t+1}) = s_{t+1}$. In partially observable environments~\cite{Kaelbling98POMDPs,Oliehoek16}, an observation function $\cO : \cS \times \cA \rightarrow \Delta ( \Omega )$ is used to sample $o(s_t, a_t, s_{t+1}) \sim O(s_t, a_t)$.} 
A sum of rewards is a \defword{return} $G_t = \sum_{t'=t}^{\infty}R_{t'}$, and aim to find $\pi^*$ that maximizes expected return $\bE_\pi[G_0]$.\footnote{
We assume finite episodic tasks of bounded length and leave out the
discount factor $\gamma$ to simplify the notation, without loss of generality.
We use $\gamma( = 0.99)$-discounted returns in our experiments.}

Value-based solution methods achieve this by computing estimates of $v_\pi(s) = \bE_\pi[G_t~|~S_t = s]$, or
$q_\pi(s, a) = \bE_\pi[G_t~|~S_t = s, A_t = a]$,
using temporal difference learning to bootstrap from other estimates, and produce a series of $\epsilon$-greedy policies
$\pi(s,a) = \epsilon / |\cA| + (1-\epsilon) \bI(a = \argmax_{a'} q_\pi(s,a'))$.
In contrast, policy gradient methods define a score function $J(\pi_\theta)$ of some parameterized (and differentiable)
policy $\pi_\theta$ with parameters $\theta$, and use gradient ascent directly on $J(\pi_\theta)$ to update $\theta$.

There have been several recent successful applications of policy gradient algorithms in complex domains such as self-play learning in AlphaGo~\cite{Silver16Go}, Atari and 3D maze navigation~\cite{Mnih2016asynchronous}, continuous control problems~\cite{Schulman15TRPO,Lillicrap16DDPG,Duan16}, robotics~\cite{Gu16}, and autonomous driving~\cite{ShalevShwartz16}.
At the core of several recent state-of-the-art Deep RL algorithms~\cite{Jaderberg17UNREAL,Espeholt18IMPALA} is the advantage actor-critic (A2C) algorithm defined
in~\cite{Mnih2016asynchronous}. In addition to learning a policy ({\it actor}), A2C learns a
parameterized {\it critic}: an estimate of $v_\pi(s)$, which it then uses both to estimate the remaining
return after $k$ steps, and as a control variate (\ie baseline) that reduces the variance of the return estimates.

% Formally describe PG and A2C.
% The algorithm we propose closely resembles A2C. Our main contribution is a new loss function that allows
% minimizing the regret $q_{\pi_\theta}(s,a) - v_{\pi_\theta}(s)$ rather than maximizing $u(\pi_\theta)$.
% Using this simple change, we can relate the algorithm

\subsection{Game Theory, Regret Minimization, and Multiagent Reinforcement Learning \label{sec:gt-rm-marl}}

%Initial multiagent RL, esp. relevant to policy gradient (WoLF, IGA, etc.) Classical multiagent RL.
%Approach to Poker. Abstraction and CFR.

In multiagent RL (MARL), $n = |\cN| = |\{1, 2, \cdots, n\}|$ agents interact within the same environment.
At each step, each agent $i$ takes an action, and the joint action $\ba$ leads to a new state
$s_{t+1} \sim \cT(s_t, \ba_t$); each player $i$ receives their own separate observation
$o_i(s_t, \ba, s_{t+1})$ and reward $r_{t,i}$. Each agent maximizes their own
return $G_{t,i}$, or their expected return which depends on the joint policy $\bpi$.

Much work in classical MARL focuses on Markov games where the environment is fully
observable and agents take actions simultaneously,
which in some cases admit Bellman operators~\cite{Littman94markovgames,Zinkevich05,Perolat15,Perolat16}. When the
environment is partially observable, policies generally map to values and actions from agents' observation
histories; even when the problem is cooperative, learning is hard~\cite{Oliehoek16}.

We focus our attention to the setting of zero-sum games, where $\sum_{i \in \cN} r_{t,i} = 0$. In this case,
polynomial algorithms exist for finding optimal policies in finite tasks for the two-player case. The guarantees
that Nash equilibrium provides are less clear for the $(n > 2)$-player case, and finding one is hard~\cite{Daskalakis06}.
Despite this, regret minimization approaches are known to filter out dominated actions, and have empirically
found good (\eg competition-winning) strategies in this setting~\cite{Risk10,Gibson13,Lanctot14Further}.

The partially observable setting in multiagent reinforcement learning requires a few more key definitions in order to properly describe the notion of state.
A \defword{history} $h \in \cH$ is a sequence of actions from all players {\it including the environment} taken from the start of an episode.
The environment (also called ``nature'') is treated as a player with a fixed policy, such that there is a deterministic mapping from any $h$ to the actual state of the environment. 
Define an \defword{information state}, $s_t = \{ h \in \cH~|~$ player $i$'s sequence of observations,
$o_{i,t'<t}(s_{t'}, \ba_{t'}, s_{t'+1})$, is consistent with $h$\}\footnote{In defining $s_t$,
we drop the reference to acting player $i$ in turn-based games without loss of generality.}. So, $s_t$ includes histories
leading to $s_t$ that are indistinguishable to player $i$;
\eg in Poker, the $h \in s_t$ differ only in the private cards dealt to opponents.
A joint policy $\bpi$ is a \defword{Nash equilibrium} if the incentive to deviate to a best response
$\delta_i(\bpi) = \max_{\pi_i'} \bE_{\pi_i', \pi_{-i}}[G_{0, i}] - \bE_{\bpi}[G_{0, i}] = 0$ for each player $i \in \cN$,
where $\pi_{-i}$ is the set of $i's$ opponents' policies.
Otherwise, $\epsilon$-equilibria are approximate, with $\epsilon = \max_i \delta_i(\bpi)$.
Regret minimization algorithms produce iterates whose average policy $\bar{\bpi}$ reduces an upper
bound on $\epsilon$; convergence is measured using
$\textsc{NashConv}(\bpi) = \sum_i \delta_i(\bpi)$. Nash equilibrium is minimax-optimal in
two-player zero-sum games, so using one minimizes worst-case losses.

There are well-known links between learning, game theory and regret minimization~\cite{Blum07}.
One method, counterfactual regret (CFR) minimization~\cite{CFR}, has led to significant progress in Poker AI.
Let $\eta^\bpi(h_t) = \prod_{t' < t} \bpi(s_{t'}, a_{t'})$, where  $h_{t'} \sqsubset h_t$ is a prefix, $h_{t'} \in s_{t'}, h_t \in s_t$, be the \defword{reach probability} of $h$ under $\pi$ from all policies' action choices.
This can be split into player $i$'s contribution and their opponents' (including nature's) contribution,  $\eta^\pi(h) = \eta^\bpi_i(h) \eta^\bpi_{-i}(h)$.
Suppose player $i$ is to play at $s$: under \defword{perfect recall}, player $i$ remembers the sequence of their own states reached, which is the same for all $h \in s$, since they differ only in private information seen by opponent(s); as a result $\forall h, h' \in s, \eta_i^\pi(h) = \eta_i^\pi(h') := \eta_i^{\pi}(s)$.
For some history $h$ and action $a$, we call $h$ a \defword{prefix history} $h \sqsubset ha$, where $ha$ is the history $h$ followed by action $a$; they may also be smaller, so $h \sqsubset ha \sqsubset hab \Rightarrow h \sqsubset hab$. 
Let $\cZ = \{ z \in \cH~|~\mbox{$z$ is terminal} \}$ and $\cZ(s,a) = \{ (h,z) \in \cH \times \cZ~|~h \in s, ha \sqsubseteq z\}$. CFR defines \defword{counterfactual values}
$v^c_i(\bpi, s_t, a_t) = \sum_{(h,z) \in \cZ(s_t, a_t)} \eta^\pi_{-i}(h) \eta^\pi_i(z) u_i(z)$, 
and $v^c_i(\bpi, s_t) = \sum_a \pi(s_t,a) v_i^c(\bpi, s_t, a_t)$, where $u_i(z)$ is the return to player $i$ along $z$, 
and accumulates regrets $\textsc{reg}_i(\bpi, s_t, a') = v_i^c(\bpi, s_t, a') - v_i^c(\bpi, s_t)$,
producing new policies from cumulative regret using \eg
regret-matching~\cite{Hart00} or exponentially-weighted experts~\cite{Exp3,Brown17}.

CFR is a policy iteration algorithm that computes the expected values by visiting every possible
trajectory, described in detail in Appendix~\ref{sec:cfr}. Monte Carlo CFR (MCCFR) samples trajectories using an exploratory behavior policy,
computing unbiased estimates $\hat{v}_i^c(\bpi, s_t)$ and $\widehat{\textsc{reg}}_i(\bpi, s_t)$ corrected by
importance sampling~\cite{Lanctot09mccfr}. Therefore, MCCFR is an
{\it off-policy Monte Carlo} method. In one MCCFR variant, \defword{model-free outcome sampling} (MFOS),
the behavior policy at opponent states is defined as $\pi_{-i}$ enabling online regret minimization
(player $i$ can update their policy independent of $\pi_{-i}$ and $\cT$).

There are two main problems with (MC)CFR methods: (i) significant variance is introduced by sampling (off-policy)
since quantities are divided by reach probabilities, (ii) there is no generalization across states except through expert abstractions and/or forward simulation with a perfect model.
We show that actor-critics address both problems and that they are a form of {\it on-policy} MCCFR.

\subsection{Most Closely Related Work}

% Touch on most relevant papers in multiagent deep RL.
% Policy gradients in multiagent RL. WoLF.
% Mention: COMA, LOLA, ACFP, etc.
% Several sentences on each of the most relevant baselines (that we compare to): NFSP, RCFR, and ARM.

There is a rich history of policy gradient approaches in MARL.
Early uses of gradient ascent showed that cyclical learning dynamics could arise, even in zero-sum matrix games~\cite{Singh00}. This was partly addressed by methods
that used variable learning rates~\cite{Bowling02,Bowling04}, policy prediction~\cite{Zhang10},
and weighted updates~\cite{Abdallah08}. The main limitation with these classical
works was scalability: there was no direct way to use function approximation, and empirical analyses
focused almost exclusively on one-shot games.

Recent work on policy gradient approaches to MARL addresses scalability by using newer
algorithms such as A3C or TRPO~\cite{Schulman15TRPO}.
Naive approaches such as independent reinforcement learning fail to find optimal
stochastic policies~\cite{Littman94markovgames,Heinrich16} and can overfit the training
data, failing to generalize during execution~\cite{Lanctot17PSRO}.
Considerable progress
has been achieved for cooperative MARL: learning to communicate~\cite{Lazaridou17}, Starcraft unit
micromanagement~\cite{Foerster17}, taxi fleet optimization~\cite{Nguyen17},
and autonomous driving~\cite{ShalevShwartz16}. There has also been significant progress for mixed
cooperative/competitive environments: using a centralized critic~\cite{Lowe17}, learning to
negotiate~\cite{Kao18}, anticipating/learning opponent responses in
social dilemmas~\cite{Foerster18,Lerer17},
and control in realistic physical environments~\cite{AlShedivat18,Bansal18}.
In this line of research, the most common evaluation methodology has been
to train centrally (for decentralized execution), either having direct access to the other
players' policy parameters or modeling them
explicitly. As a result, assumptions are made about the form of the other agents' policies, utilities,
or learning mechanisms.

There are also methods that attempt to model the opponents~\cite{dpiqn,He16DRON,Albrecht18Modeling}. Our methods do no such modeling, and can be classified in the ``forget'' category of the taxonomy proposed in~\cite{HernandezLeal18Survey}: that is, due to its on-policy nature, actors and critics adapt to and learn mainly from new/current experience.

We focus on the {\it model-free} (and online) setting: other agents' policies are inaccessible; training is not separated from execution.
Actor-critics were recently studied in this setting for multiagent games~\cite{Perolat18}, whereas we focus on partially-observable environments; only tabular methods are known to converge.
Fictitious Self-Play computes approximate best responses via RL~\cite{Heinrich15FSP,Heinrich16}, and can also be model-free.
%% remove RCFR
Regression CFR (RCFR) uses regression to estimate cumulative regrets from CFR~\cite{Waugh15solving}.
%We implement RCFR with sampling and deep networks: DeepMCRCFR, similar to Advantage Regret Minimization (ARM)~\cite{Jin17ARM} based on CFR+~\cite{Tammelin15CFRPlus}. In~\cite{Jin17ARM},
RCFR is closely related to Advantage Regret Minimization (ARM)~\cite{Jin17ARM}.
ARM~\cite{Jin17ARM} shows regret estimation methods handle partial observability better than standard RL, but was not evaluated in multiagent environments. 
In contrast, we focus primarily on the multiagent setting.

\section{Multiagent Actor-Critics: Advantages and Regrets  \label{sec:rpg}}

%Regret-based policy gradient (RPG) aims to derive a policy gradient that minimizes the \emph{regret} of the agent over time.
%How are regrets and advantages related? 
%The regret for not taking an action $a$ (and instead %following $\pi(s,a)$) seems similar to the {\it advantage} $q_\pi(s,a) - v_\pi(s)$ that guide actor-critic algorithms.
%We show a formal relationship in Section~\ref{sec:rpg-seq}, and the counterfactual property leads to the main difference.

%Given a trajectory $\rho = (s_0, a_0, r_1, s_1, a_1,$ $\ldots, r_T, s_T)$ and externally computed $q$-value function paramareterized by weights $\bw$,
%the {\it regret} of {\it not} taking action $a$, instead following policy $\pi$, at state $s_t$ is $r(\pi, s_t, a; \btheta) = \left( q(s_t, a; \bw) - \sum_{a' \in \cA} \pi(a' | s_t, \btheta) q(s_t, a'; \bw) \right)^{+}$,
%where $x^{+} = \max(x, 0)$.
%Define $R(\pi, \rho) = \sum_{t=0}^T \sum_{a \in \cA} r(\pi, s_t, a; \btheta)$ as the total regret along a trajectory $\rho$.
%Finally, the loss function $\cL(\pi) = \bE_{\rho \sim \pi} \left[ R(\pi, \rho) \right]$.
%We aim to minimize this using gradient descent.

CFR defines policy update rules from thresholded cumulative counterfactual regret: $\textsc{tcreg}_i(K, s,a) = ( \sum_{k \in \{1, \cdots, K\}} \textsc{reg}_i(\pi_k, s, a) )^+$, where $k$ is the number of iterations and $(x)^+ = \max(0, x)$. 
In CFR, regret matching updates a policy to be proportional to $\textsc{tcreg}_i(K,s,a)$. 

On the other hand, REINFORCE~\cite{Williams92} samples trajectories and computes gradients for each state $s_t$, updating $\btheta$ toward $\nabla_{\btheta} \log(s_t, a_t; \btheta) G_t$.
A baseline is often subtracted from the return: $G_t - v_\pi(s_t)$, and policy gradients then become actor-critics, training $\pi$ and $v_\pi$ separately.
The log appears due to the fact that action $a_t$ is sampled from the policy, the value is divided by $\pi(s_t, a_t)$ to ensure the estimate is properly estimating the true expectation~\cite[Section 13.3]{Sutton18}, and $\nabla_{\btheta} \pi(s_t, a_t; \btheta) / \pi(s_t, a_t, \btheta) = \nabla_{\btheta} \log \pi(s_t, a_t; \btheta)$. One could instead train $q_\pi$-based critics from states {\it and} actions. This leads to a $q$-based Policy Gradient (QPG) (also known as Mean Actor-Critic~\cite{Allen18MAC}):
\begin{equation}
    \nabla_{\btheta}^{\textsc{QPG}}(s) = \sum_a [\nabla_\theta \pi(s, a; \btheta)] \left(q(s,a; \bw) - \sum_b \pi(s, b; \btheta)  q(s,  b, \bw)\right),
\label{eq:qac}
\end{equation}
an advantage actor-critic algorithm differing from A2C in the (state-action) representation of the critics~\cite{liu2018action,Wu18} and summing over actions similar to the all-action algorithms~\cite{Sutton01Comparing,Peters02Policy,Ciosek18EPG,Allen18MAC}.
Interpreting $a_\pi(s,a) = q_\pi(s,a) - \sum_b \pi(s,b) q_\pi(s,b)$ as a regret, we can instead minimize a loss defined by an upper bound on the thresholded cumulative regret: $\sum_k (a_{\pi_k}(s,a))^+ \ge (\sum_k (a_{\pi_k}(s,a))^+$, moving the policy toward a no-regret region.
We call this Regret Policy Gradient (RPG):
\begin{equation}
    \nabla_{\btheta}^{\textsc{RPG}}(s) = -\sum_a \nabla_\theta \left(q(s,a; \bw) - \sum_b \pi(s, b; \btheta) q(s,b; \bw)\right)^+.
\label{eq:rpg}
\end{equation}
The minus sign on the front represents a switch from gradient ascent on the score to {\it descent} on the loss.
Another way to implement an adaptation of the regret-matching rule is by weighting the policy gradient by the thresholded regret, which we call Regret Matching Policy Gradient (RMPG):
\begin{equation}
    \nabla_{\btheta}^{\textsc{RMPG}}(s) = \sum_a [\nabla_\theta \pi(s, a; \btheta)] \left(q(s,a; \bw) - \sum_b \pi(s, b; \btheta) q(s, b, \bw)\right)^+.
\label{eq:rmpg}
\end{equation}

In each case, the critic $q(s_t, a_t; \bw)$ is trained in the standard way, using $\ell_2$ regression loss from sampled returns.
The pseudo-code is given in Algorithm~\ref{alg:rpg} in Appendix~\ref{sec:pseudocode}.
In Appendix~\ref{sec:app-qpg-rpg}, we show that the QPG gradient is proportional to the RPG gradient at $s$: $\nabla_{\btheta}^\textsc{RPG}(s) \propto \nabla_{\btheta}^\textsc{QPG}(s)$.

\subsection{Analysis of Learning Dynamics on Normal-Form Games}

%In this section, we discuss the learning dynamics of the RPG algorithm. 
%the policy gradients in several canonical two-player normal form games, illustrating trajectories of the dynamics of the learning agents.

%[[ Julien: here dynamics + eqn]] We start from ...
The first question is whether any of these variants can converge to an equilibrium, even in the simplest case. 
%So, 
%we now show phase plots of the dynamical system described by the regret-based policy gradients on a commonly-used example game of Matching Pennies: a one-shot version of Rock, Paper, scissors with two actions.
So, we now show phase portraits of the learning dynamics on Matching Pennies: a two-action version of Rock, Paper, Scissors.
These analyses are common in multiagent learning as they allow visual depiction of the policy changes and how different factors affect the (convergence) behavior~\cite{Singh00,walsh:02,Bowling02,Walsh03,Bowling04,Wellman06,Abdallah08,Zhang10,Wunder2010,BloembergenTHK15,Tuyls18}.
Convergence is difficult in Matching Pennies as the only Nash equilibrium $\pi^* = ((\frac{1}{2}, \frac{1}{2}), (\frac{1}{2}, \frac{1}{2}))$ requires learning stochastic policies.
We give more detail and results on different games that cause cyclic learning behavior in Appendix~\ref{sec:app-dynamics}.

In Figure~\ref{fig:mp-dynamics}, we see the similarity of the regret dynamics to replicator dynamics~\cite{TaylorJonkerRD,Sandholm17}.
%In the sequential settings, adaptations often need to be made in order to conserve the
%dynamics in these one-shot settings~\cite{Gatti13Efficient}; we show in Section~\ref{sec:rpg-seq} that this is achieved, partly implicitly, by actor-critics
%due to how $q_\pi$ is defined.
We also show the {\it average policy
dynamics} and observe convergence to equilibrium in each game we tried, which is a known to be guaranteed in two-player zero-sum games using
%This is encouraging, because the average policy is the one that provably converges in two-player zero-sum games,
CFR, fictitious play~\cite{Brown51}, and continuous replicator dynamics~\cite{Hofbauer09}. However, computing the average policy is complex~\cite{Heinrich15FSP,CFR} and potentially worse with function approximation, requiring storing past data in large buffers~\cite{Heinrich16}. 

\begin{figure}[t!]
\begin{tabular}{ccc}
\hspace{-0.3cm}
\includegraphics[width=0.32\textwidth]{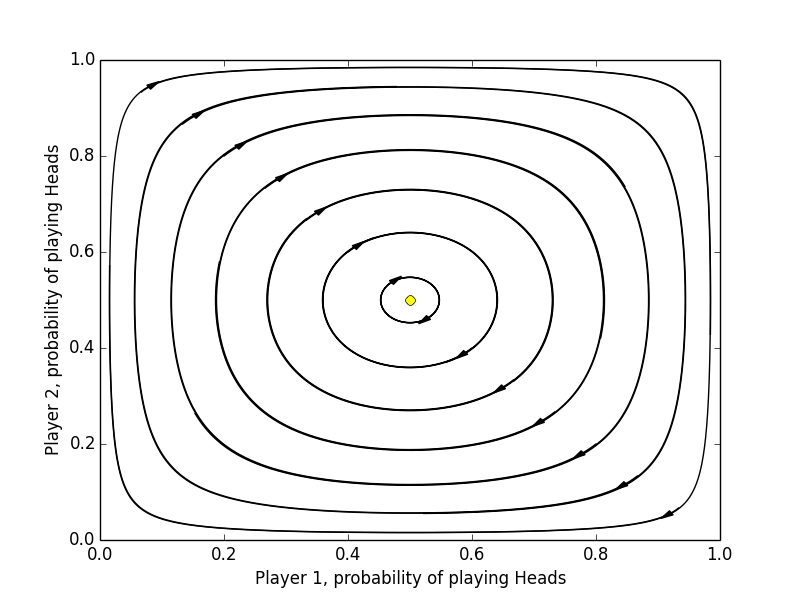} &
\includegraphics[width=0.32\textwidth]{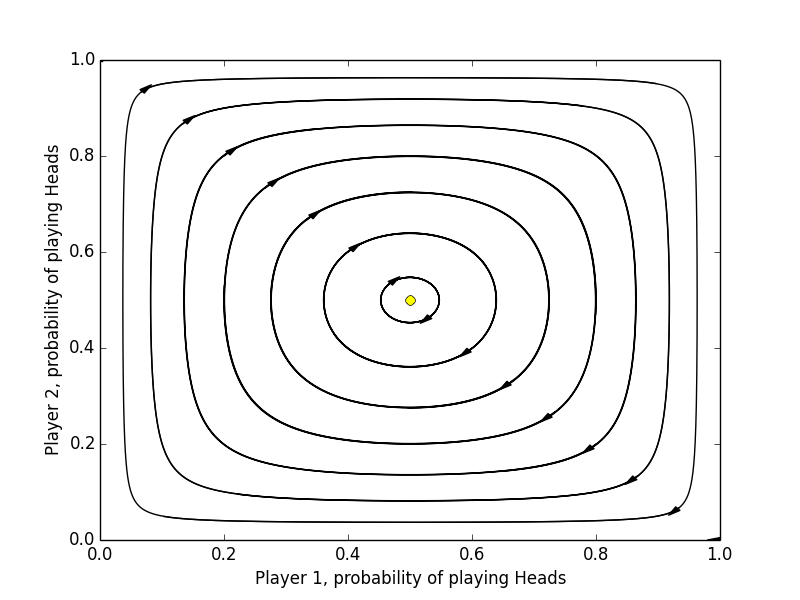} & 
\includegraphics[width=0.32\textwidth]{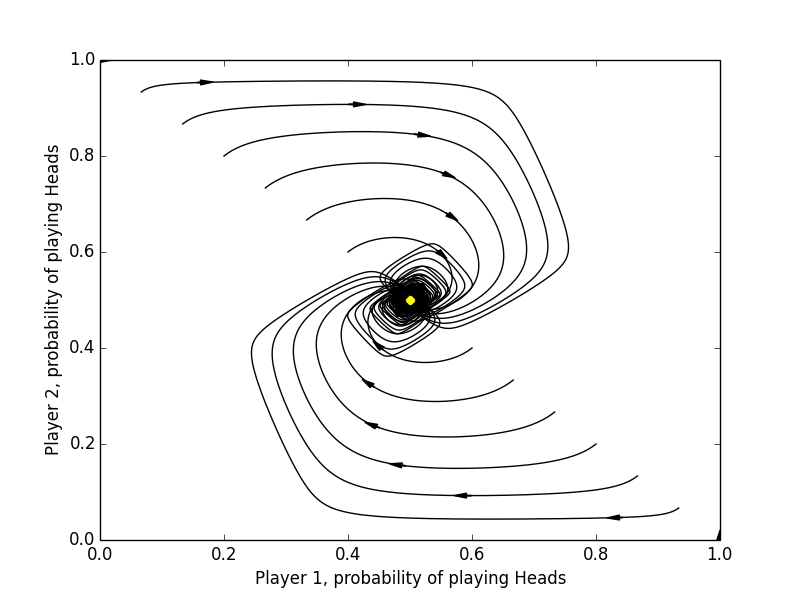} \\
(a) Replicator Dynamics & (b) RPG Dynamics & (c) Average RPG Dynamics \\
\end{tabular}
\caption{Learning Dynamics in Matching Pennies: (a) and (b) show the vector field for ${\partial \pi} / {\partial t}$ including example particle traces, where each point is each player's probability of their first action; (c) shows example traces of policies following a discrete approximation to $\int_0^t {\partial \pi}/{\partial t}$. \label{fig:mp-dynamics}} 
\end{figure}

%[[ JK: short discussion of plots ]]]
%Talk here about the dynamics of the strategies on games small enough to show analytically. Relate this to early work in multiagent RL and evolutionary dynamics (cite Tuyls JAIR paper). In fact, RPG looks quite similar to RD. Show dynamics, traces, average strategy traces. Relate extensions to extensive-form games to SFRD and SFQ, and maybe to the 3P Kuhn results.

\subsection{Partially Observable Sequential Games \label{sec:rpg-seq}}

% Relate RPG to on-policy MCCFR in the tabular case.
% Convergence properties.

%We start the analsis in the sequential case with a %discussion comparing two different quantities:  counterfactual values
How do the values $v_i^c(\bpi, s_t, a_t)$ and $q_{\bpi,i}(s_t, a_t)$ differ?
The authors of \cite{Jin17ARM} posit that they are approximately equal when $s_t$ rarely occurs more than once in a trajectory.
First, note that $s_t$ cannot be reached more than once in a trajectory from our definition of $s_t$, because the observation histories (of the player to play at $s_t$) would be different in each occurrence (\ie due to perfect recall).
So, the two values are indeed equal in deterministic, single-agent environments.
In general, counterfactual values are conditioned on {\it player $i$ playing to reach $s_t$}, whereas $q$-function estimates are conditioned on {\it having reached $s_t$}. So, $q_{\pi,i}(s_t,a_t) = \bE_{\rho \sim \bpi}[ G_{t,i}~|~S_t = s_t, A_t = a_t ]$
\begin{eqnarray*}
                & = & \sum_{h,z \in \cZ(s_t,a_t)} \Pr(h~|~s_t) \eta^\pi(ha,z) u_i(z)~~~~~~~~~~~~~~~~~~~~~~~~~\text{where $\eta^\pi(ha,z) = \frac{\eta^\pi(z)}{\eta^\pi(h) \pi(s,a)}$} \\
                & = & \sum_{h,z \in \cZ(s_t,a_t)} \frac{\Pr(s_t~|~h)  \Pr(h)}{\Pr(s_t)} \eta^\pi(ha,z) u_i(z) ~~~~~~~~~~~~~~\text{by Bayes' rule}\\ 
                & = & \sum_{h,z \in \cZ(s_t,a_t)} \frac{\Pr(h)}{\Pr(s_t)} \eta^\pi(ha,z) u_i(z)~~~~~~~~~~~~~~~~~~~~~~~~~~~~~~\text{since $h \in s_t$, $h$ is unique to $s_t$}\\
                & = & \sum_{h,z \in \cZ(s_t,a_t)} \frac{\eta^\bpi(h)}{\sum_{h' \in s_t}{\eta^\bpi(h')}} \eta^\pi(ha,z) u_i(z) \\
                & = & \sum_{h,z \in \cZ(s_t,a_t)} \frac{\eta_i^\bpi(h) \eta_{-i}^\bpi(h)}{\sum_{h' \in s_t}{\eta_i^\bpi(h') \eta_{-i}^\bpi(h')}} \eta^\pi(ha,z) u_i(z) \\
                & = & \sum_{h,z \in \cZ(s_t,a_t)} \frac{\eta_i^\bpi(s) \eta_{-i}^\bpi(h)}{\eta_i^\bpi(s) \sum_{h' \in s_t}{\eta_{-i}^\bpi(h')}} \eta^\pi(ha,z) u_i(z)~~~~~\text{due to def. of $s_t$ and perfect recall}\\
                & = & \sum_{h,z \in \cZ(s_t,a_t)} \frac{\eta_{-i}^\bpi(h)}{\sum_{h' \in s_t}{\eta_{-i}^\bpi(h')}} \eta^\pi(ha,z) u_i(z)~~=~~\frac{1}{\sum_{h \in s_t}\eta^\bpi_{-i}(h)} v_i^c(\pi, s_t, a_t).\\
\end{eqnarray*}

The derivation is similar to show that $v_{\bpi,i}(s_t) = v_i^c(\pi, s_t) / \sum_{h \in s_t} \eta^\bpi_{-i}(h)$.
Hence, counterfactual values and standard value functions are generally not equal, but are scaled by the Bayes normalizing constant $\cB_{-i}(\bpi, s_t) = \sum_{h \in s_t} \eta^\bpi_{-i}(h)$. If there is a low probability of reaching $s_t$ due to the environment or due to opponents' policies, these values will differ significantly.

This leads to a new interpretation of actor-critic algorithms in the multiagent partially observable setting: the advantage values $q_{\bpi,i}(s_t, a_t) - v_{\bpi,i}(s_t, a_t)$ are immediate counterfactual regrets scaled by $1/\cB_{-i}(\bpi, s_t)$. 

Note that the standard policy gradient theorem holds: gradients can be estimated from samples. 
This follows from the derivation of the policy gradient in the tabular case (see Appendix~\ref{sec:app-sequential}).
When TD bootstrapping is not used, the Markov property is not required; having multiple agents and/or partial observability does not change this. For a proof using REINFORCE ($G_t$ only), see~\cite[Theorem 1]{ShalevShwartz16}. The proof trivially follows using $G_{t,i} - v_{\bpi,i}$ since $v_{\bpi,i}$ is trained separately and does not depend on $\rho$. 

Policy gradient algorithms perform gradient ascent on $J^{PG}(\pi_{\btheta}) = v_{\bpi_\theta}(s_0)$, using $\nabla_{\btheta} J^{PG}(\bpi_{\btheta}) \propto \sum_s \mu(s) \sum_a \nabla_{\btheta} \pi_{\theta}(s,a) q_{\pi}(s,a)$, where $\mu$ is on-policy distribution under $\pi$~\cite[Section 13.2]{Sutton18}.
The actor-critic equivalent is $\nabla_{\btheta} J^{AC}(\pi_{\btheta}) \propto \sum_s \mu(s) \sum_a \nabla_{\btheta} \pi_{\theta}(s,a) (q_{\pi}(s,a) - \sum_b \pi(s,b) q_{\pi}(s,b))$. Note that the baseline is unnecessary when summing over the actions and $\nabla_{\btheta} J^{AC}(\pi_{\btheta}) = \nabla_{\btheta} J^{PG}(\pi_{\btheta})$~\cite{Allen18MAC}. 
%Also, the common motivation of reducing variance does not apply without sampling, but we use sampling and the actor-critic formulation in our experiments.
However, our analysis relies on a projected gradient descent algorithm that does not assume simplex constraints on the policy: in that case, in general $\nabla_{\btheta} J^{AC}(\pi_{\btheta}) \not= \nabla_{\btheta} J^{PG}(\pi_{\btheta})$.
\begin{definition}
\label{def:acpi}
Define \defword{policy gradient policy iteration} (PGPI) as a process that iteratively runs $\btheta \leftarrow \btheta + \alpha \nabla_{\btheta} J^{PG}(\pi_{\btheta})$, and 
\defword{actor-critic policy iteration} (ACPI) similarly using $\nabla_{\btheta} J^{AC}(\pi_{\btheta})$.
\end{definition}
%Note that the projection operator back to the simplex to enforce $\sum_b \pi(s,b) = 1$ is left unspecified.
In two-player zero-sum games, PGPI/ACPI are gradient ascent-descent problems, because each player is trying to ascend their own score function, and when using tabular policies a solution exists due to the minimax theorem~\cite{Shoham09}. 
Define player $i$'s \defword{external regret} over $K$ steps as $R_i^K = \max_{\pi_i'\in \Pi_i} \left( \sum_{k=1}^K  \bE_{\pi_i'}[G_{0,i}] - \bE_{\pi^k}[G_{0,i}] \right) $, where $\Pi_i$ is the set of deterministic policies.
%We can now state our main result:
\vspace{0.1cm}
\begin{theorem}
\label{thm:pg-cfr-conv}
In two-player zero-sum games, when using tabular policies and an $\ell_2$ projection $P(\btheta) =  \argmin_{\btheta' \in \Delta(\cS, \cA)} \Vert \btheta - \btheta' \Vert_2$, where $\Delta(\cS, \cA) = \{ \btheta~|~ \forall s \in \cS, \sum_{b \in \cA} \btheta_{s,b} = 1\}$ is the space of tabular simplices, if player $i$ uses learning rates of $\alpha_{s,k} =  k^{-\frac{1}{2}}$ at $s$ on iteration $k$, and $\theta^k_{s,a} > 0$ for all $k$ and $s$, then projected PGPI, $\theta^{k+1}_{s, \cdot} \leftarrow P( \{ \theta^k_{s,a} + \alpha_{s,k} \frac{\partial}{\partial \theta^k_{s,a}} J^{PG}(\pi_{\btheta^k}) \}_a )$, has regret $R_i^K \le \frac{1}{\eta_i^{\min}} |\cS_i| \left( \sqrt{K} + (\sqrt{K} - \frac{1}{2}) |\cA| (\Delta r)^2 \right) + O(K)$, where $\cS_i$ is the set of player $i$'s states, $\Delta r$ is the reward range, and $\eta_i^{\min} = \min_{s,k} \eta_i^k(s)$. The same holds for projected ACPI (see appendix).
\end{theorem}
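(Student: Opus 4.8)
The plan is to bound the global external regret $R_i^K$ by a sum of per-information-state (local) counterfactual regrets and then control each local term by a standard online gradient descent (OGD) argument, exploiting the identification of the per-state PGPI update with projected OGD on counterfactual values. First I would invoke the CFR regret decomposition of~\cite{CFR}: under perfect recall, $R_i^K \le \sum_{s \in \cS_i} R_i^K(s)$, where $R_i^K(s) = \max_{a^* \in \cA} \sum_{k=1}^K \langle v_i^c(\pi^k, s, \cdot),\, e_{a^*} - \pi^k(s, \cdot)\rangle$ is the cumulative immediate counterfactual regret at $s$, and $e_{a^*}$ is the simplex vertex of the best fixed (deterministic) action in $\Pi_i$. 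This reduces the problem to bounding a single $R_i^K(s)$, viewed as the regret of an online linear optimizer over the simplex $\Delta(\cA)$.

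The second step is to recognize the per-state update as projected OGD on a \emph{scaled} counterfactual value. Using the tabular policy gradient theorem with the identities derived above, namely $\mu(s) = \eta_i^{\bpi}(s)\,\cB_{-i}(\bpi, s)$ and $q_{\bpi,i}(s,a) = v_i^c(\bpi, s, a)/\cB_{-i}(\bpi, s)$, the Bayes constant $\cB_{-i}$ cancels and the gradient coordinate reduces to $\eta_i^{\bpi}(s)\, v_i^c(\bpi, s, a)$ (up to a constant absorbed into the step size). Because $\Delta(\cS,\cA)$ is a product of per-state simplices, the $\ell_2$ projection $P$ decomposes blockwise, so the update at $s$ is genuinely projected gradient ascent with gain vector $g^k = \eta_i^{\pi^k}(s)\, v_i^c(\pi^k, s, \cdot)$ and step size $\alpha_{s,k} = k^{-1/2}$. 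Zinkevich's OGD bound then gives $\sum_k \langle g^k, e_{a^*} - \theta^k_{s,\cdot}\rangle \le \frac{D^2}{2\alpha_{s,K}} + \frac12 \sum_k \alpha_{s,k}\|g^k\|^2$; plugging in the simplex diameter $D^2 = 2$, the bound $\|g^k\|^2 \le |\cA|(\Delta r)^2$ (from $\eta_i^{\pi^k}(s) \le 1$ and $|v_i^c| \le \Delta r$), and $\sum_{k=1}^K k^{-1/2} \le 2\sqrt{K} - 1 = 2(\sqrt{K} - \tfrac12)$, yields the clean surrogate bound $\sum_k \langle g^k, e_{a^*} - \theta^k_{s,\cdot}\rangle \le \sqrt{K} + (\sqrt{K} - \tfrac12)|\cA|(\Delta r)^2$.

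The final and hardest step is converting this \emph{weighted} regret back into the unweighted counterfactual regret $R_i^K(s) = \sum_k r_k$, with $r_k := \langle v_i^c(\pi^k, s, \cdot), e_{a^*} - \theta^k_{s,\cdot}\rangle$, since the surrogate bound controls $\sum_k \eta_i^{\pi^k}(s)\, r_k$ rather than $\sum_k r_k$. The positivity assumption $\theta^k_{s,a} > 0$ guarantees $\eta_i^{\pi^k}(s) \ge \eta_i^{\min} > 0$, so I would write $R_i^K(s) = \frac{1}{\eta_i^{\min}}\sum_k \eta_i^{\pi^k}(s) r_k + \sum_k r_k\big(1 - \eta_i^{\pi^k}(s)/\eta_i^{\min}\big)$, where the first term is at most $\frac{1}{\eta_i^{\min}}\big(\sqrt{K} + (\sqrt{K} - \tfrac12)|\cA|(\Delta r)^2\big)$. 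The obstacle is the residual sum: because the reach probability $\eta_i^{\pi^k}(s)$ fluctuates across iterations and $r_k$ is not sign-definite, it cannot be made sublinear, and bounding $|r_k|$ and $|1 - \eta_i^{\pi^k}(s)/\eta_i^{\min}|$ by constants yields only an $O(K)$ estimate. Summing the per-state bounds over $s \in \cS_i$ gives $R_i^K \le \frac{1}{\eta_i^{\min}} |\cS_i|\big(\sqrt{K} + (\sqrt{K} - \tfrac12)|\cA|(\Delta r)^2\big) + O(K)$, as claimed. For ACPI the gain vector becomes the advantage $q_{\bpi,i}(s,\cdot) - \sum_b \pi(s,b) q_{\bpi,i}(s,b)$; since the subtracted baseline is constant across actions at $s$, it cancels in the fixed-action comparison $\langle \cdot, e_{a^*} - \theta^k_{s,\cdot}\rangle$, and the identical argument applies.
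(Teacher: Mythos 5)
Your proposal is correct and follows essentially the same route as the paper's proof: identify each per-state update as projected online gradient ascent (GIGA) on the reach-weighted counterfactual values $\eta_i^{\pi^k}(s)\,v_i^c(\pi^k,s,\cdot)$, apply Zinkevich's bound, rescale by $1/\eta_i^{\min}$ at the cost of an unavoidable $O(K)$ residual from the sign-indefinite regrets, and sum over $\cS_i$. The only cosmetic difference is in the ACPI case, where you cancel the constant baseline inside the regret inner product while the paper instead shows the $\ell_2$ simplex projection is invariant to constant shifts of the pre-projection iterate (its Lemma~\ref{lem:projection}); both arguments are valid.
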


The proof\footnote{Note that the theorem statements and proofs differ from their original form. See Appendix~\ref{app:errata} for details.} is given in Appendix~\ref{sec:app-sequential}. 
In the case of sampled trajectories, as long as every state is reached with positive probability, Monte Carlo estimators of $q_{\bpi,i}$ will be consistent.
Therefore, we use exploratory policies and decay exploration over time. With a finite number of samples, the probability that an estimator $\hat{q}_{\bpi,i}(s,a)$ differs by some quantity away from its mean is determined by Hoeffding's inequality and the reach probabilities. We suspect these errors could be accumulated to derive probabilistic regret bounds similar to the off-policy Monte Carlo case~\cite{Lanctot09Sampling}.

The analysis so far has concentrated on establishing relationships for the optimization problem that underlies standard formulation of policy gradient and actor-critic algorithms. A different bound can be achieved by using stronger policy improvement (proof and details are found in Appendix~\ref{sec:app-sequential}):

\begin{theorem}
\label{thm:strong-acpi}
Define a state-local $J^{PG}(\bpi_{\btheta}, s) = v_{\pi_{\btheta},i}(s)$, composite gradient $\{ \frac{\partial}{\partial \theta_{s,a}} J^{PG}(\bpi_{\btheta}, s) \}_{s,a}$, \defword{strong policy gradient policy iteration} (SPGPI), and \defword{strong actor-critic policy iteration} (SACPI) as in Definition~\ref{def:acpi} except replacing the gradient components with $\frac{\partial}{\partial \theta_{s,a}} J^{PG}(\pi_{\btheta},s)$. Then, in two-player zero-sum games, when using tabular policies and projection $P(\btheta)$ as defined in Theorem~\ref{thm:pg-cfr-conv} with learning rates $\alpha_{k} = k^{-\frac{1}{2}}$ on iteration $k$, projected SPGPI, $\theta^{k+1}_{s, \cdot} \leftarrow P( \{ \theta^k_{s,a} + \alpha_k \frac{\partial}{\partial \theta^k_{s,a}} J^{PG}(\bpi_{\btheta}, s) \}_a)$, has regret $R_i^K \le |\cS_i| \left( \sqrt{K} + (\sqrt{K} - \frac{1}{2}) |\cA| (\Delta r)^2 \right) + O(K)$, where $\cS_i$ is the set of player $i$'s states and $\Delta r$ is the reward range. This also holds for projected SACPI (see appendix).
\end{theorem}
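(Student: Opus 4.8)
The plan is to reduce projected SPGPI to independent projected online gradient ascent at each of player $i$'s information states, bound the per-state regret by a standard analysis, and then lift these local bounds to the external regret $R_i^K$ via the counterfactual regret decomposition of~\cite{CFR}. The first step is the composite-gradient computation. Since the excerpt already shows that under perfect recall an information state $s$ is visited at most once per trajectory, $q_{\bpi,i}(s,a)$ does not depend on the row $\theta_{s,\cdot}$, and the state-local objective is linear in that row: $J^{PG}(\bpi_{\btheta},s)=v_{\bpi,i}(s)=\sum_b\theta_{s,b}\,q_{\bpi,i}(s,b)$, whence $\partial J^{PG}(\bpi_{\btheta},s)/\partial\theta_{s,a}=q_{\bpi,i}(s,a)$. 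Thus projected SPGPI is, independently at each $s$, projected online gradient ascent on $\Delta(\cA)$ with the time-varying linear payoff $q_{\bpi^k,i}(s,\cdot)$ and step size $\alpha_k=k^{-1/2}$. This is exactly where the $1/\eta_i^{\min}$ factor disappears: for the global objective of Theorem~\ref{thm:pg-cfr-conv} the same derivative carries the reach weight $\mu^k(s)=\eta_i^{\bpi^k}(s)\cB_{-i}(\bpi^k,s)$, while the state-local objective strips the $\eta_i^{\bpi^k}(s)$ factor off the payoff.

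Next I would invoke the standard regret bound for projected online gradient ascent with nonincreasing steps, $\mathrm{regret}\le \tfrac{D^2}{2\alpha_K}+\tfrac12\sum_k\alpha_k\|g_k\|_2^2$. The $\ell_2$ diameter of the simplex gives $D^2=2$, so the first term is $\sqrt{K}$; the payoff norm is bounded by $\|q_{\bpi^k,i}(s,\cdot)\|_2^2\le|\cA|(\Delta r)^2$, where I would note that regret is invariant to subtracting the action-independent baseline $v_{\bpi^k,i}(s)$, so one may equivalently use the advantage, whose entries lie in $[-\Delta r,\Delta r]$. With $\sum_{k=1}^K k^{-1/2}\le 2\sqrt{K}-1$, the local regret $R_q^K(s)=\max_a\sum_{k=1}^K\big(q_{\bpi^k,i}(s,a)-v_{\bpi^k,i}(s)\big)$ satisfies $R_q^K(s)\le\sqrt{K}+(\sqrt{K}-\tfrac12)|\cA|(\Delta r)^2$, reproducing the per-state term exactly.

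Finally I would lift these to $R_i^K$. By the counterfactual regret decomposition~\cite{CFR}, $R_i^K\le\sum_{s\in\cS_i}\big(R_i^{\mathrm{imm}}(s)\big)^+$ with $R_i^{\mathrm{imm}}(s)=\max_a\sum_k\big(v_i^c(\bpi^k,s,a)-v_i^c(\bpi^k,s)\big)$. The identity derived in the excerpt, $v_i^c(\bpi^k,s,a)=\cB_{-i}(\bpi^k,s)\,q_{\bpi^k,i}(s,a)$ (and likewise for $v_i^c(\bpi^k,s)$), shows each immediate regret is a $\cB_{-i}$-weighted version of the local $q$-regret; writing $\cB_{-i}=1-(1-\cB_{-i})$ with $0\le\cB_{-i}\le1$ and $|q-v|\le\Delta r$ yields $R_i^{\mathrm{imm}}(s)\le R_q^K(s)+\Delta r\sum_k\big(1-\cB_{-i}(\bpi^k,s)\big)$. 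Summing over $\cS_i$ and combining with the per-state bound gives $R_i^K\le|\cS_i|\big(\sqrt{K}+(\sqrt{K}-\tfrac12)|\cA|(\Delta r)^2\big)+O(K)$, the residual being $\Delta r\sum_{s}\sum_k(1-\cB_{-i}(\bpi^k,s))$. SACPI merely replaces $q$ by the advantage, changing neither $R_q^K(s)$ (baseline invariance) nor any other term, so it inherits the identical bound.

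The hard part is this last conversion. Online gradient ascent controls the \emph{unweighted} $q$-regret, whereas the sequential external regret is governed through the decomposition by the opponent-reach-\emph{weighted} counterfactual regret, and the two differ by the time-varying factor $\cB_{-i}(\bpi^k,s)$, which can be bounded away from $1$. This mismatch is what forces the $O(K)$ residual and is the essential gap from classical CFR, whose use of counterfactual (rather than standard) values avoids it precisely; tightening it, say under conditions making $\sum_k(1-\cB_{-i}(\bpi^k,s))$ sublinear, would be the route to a genuine no-regret rate. The positivity assumption $\theta^k_{s,a}>0$ is used throughout to keep every reach probability, hence every $q_{\bpi^k,i}$ and the decomposition itself, well defined.
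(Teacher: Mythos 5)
Your proposal is correct and follows essentially the same route as the paper's: you compute the state-local gradient component as the $q$-value (equivalently $v_i^c(\pi,s,a)/\cB_{-i}(\pi,s)$), recognize projected SPGPI as per-state projected online gradient ascent (the paper's GIGA$(s)$) and apply Zinkevich's bound with $D^2=2$ to obtain the per-state term, handle SACPI via invariance of the simplex projection to the constant baseline shift, and lift to external regret through the CFR decomposition while absorbing the mismatch between the unweighted $q$-regret and the $\cB_{-i}$-weighted counterfactual regret into the $O(K)$ term. Your explicit form of that residual, $\Delta r\sum_{s,k}\bigl(1-\cB_{-i}(\bpi^k,s)\bigr)$, is a slightly more concrete rendering of the same obstruction the paper discusses in its errata appendix, but the argument is the same.
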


Note that none of the variants are guaranteed to have regret sublinear in $K$, so they may not converge to a Nash equilibrium. We discuss this further in the appendix.

\section{Empirical Evaluation}

We now assess the behavior of the actor-critic algorithms in practice.
While the analyses in the previous section established relationships for the tabular case, ultimately we want to assess scalability and generalization potential for larger settings. Our implementation parameterizes critics and policies using neural networks with two fully-connected layers of 128 units each, and rectified linear unit activation functions, followed by a linear layer to output a single value $q$ or softmax layer to output $\pi$. We chose these architectures to remain consistent with previous evaluations~\cite{Heinrich16,Lanctot17PSRO}. 

\subsection{Domains: Kuhn and Leduc Poker}

%Describe Kuhn and Leduc. Do not restrict description to 2-player case. Mention that it is the "hard" Leduc from the PSRO paper.
We evaluate the actor-critic algorithms on two $n$-player games: Kuhn poker, and Leduc poker.

\defword{Kuhn poker} is a toy game where each player starts with 2 chips, antes 1 chip to play, and receives one card face down from a deck of size $n+1$ (one card remains hidden). Players proceed by betting (raise/call) by adding their remaining chip to the pot, or passing (check/fold) until all players are either in (contributed as all other players to the pot) or out (folded, passed after a raise). The player with the highest-ranked card that has not folded wins the pot.

In \defword{Leduc poker}, players have a limitless number of chips, and the deck has size $2(n+1)$, divided into two suits of identically-ranked cards. There are two rounds of betting, and after the first round a single public card is revealed from the deck. Each player antes 1 chip to play, and the bets are limited to two per round, and number of chips limited to 2 in the first round, and 4 in the second round.

The rewards to each player is the number of chips they had after the game minus before the game. To remain consistent with other baselines, we use the form of Leduc described in~\cite{Lanctot17PSRO} which does not restrict the action space, adding reward penalties if/when illegal moves are chosen.

\subsection{Baseline: Neural Fictitious Self-Play}

%Describe NFSP and MCRCFR in more detail.
We compare to one main baseline. \defword{Neural Fictitious Self-Play} (NFSP) is an implementation of fictitious play, where approximate best responses are used in place of full best response~\cite{Heinrich16}. Two transition buffers of are used: $\cD^{RL}$ and $\cD^{ML}$; the former to train a DQN agent towards a best response $\pi_i$ to $\bar{\pi}_{-i}$, data in the latter is replaced using reservoir sampling, and trains $\bar{\bpi}_i$ by classification.

%\defword{Monte Carlo Regression CFR} (MCRCFR) is an adaptation of the regression CFR~\cite{Waugh15solving}, which approximates $\frac{1}{K}\sum_{k \in \{ 1, \cdots, K \}}\textsc{reg}_{i,t,k}(s,a)$ from which policies are derived via regret-matching~\cite{Hart00}. We adapt RCFR to use neural networks and a raw representation of the input. We also add (on-policy) sampling, which makes it model-free. We describe this baseline in more detail, show convergence results for the different variants, and describe similarities and differences to ARM, in Appendix~\ref{sec:app-experiments}.

\subsection{Main Performance Results}

Here we show the empirical convergence to approximate Nash equlibria for each algorithm in self-play, and performance against fixed bots. The standard metric to use for this is \textsc{NashConv}($\bpi$) defined in Section~\ref{sec:gt-rm-marl}, which reports the accuracy of the approximation to a Nash equilibrium.

{\bf Training Setup}.
In the domains we tested, we observed that the variance in returns was high and hence we performed multiple policy evaluation updates ($q$-update for $\nabla^\textsc{QPG}$
, $\nabla^\textsc{RPG}$, and $\nabla^\textsc{RMPG}$, and $v$-update for A2C) followed by policy improvement (policy gradient update). These updates were done using separate SGD optimizers with their respective learning rates of fixed 0.001 for policy evaluation, and annealed from a starting learning rate to 0 over 20M steps for policy improvement. (See Appendix \ref{sec:app-experiments} for exact values). Further, the policy improvement step is applied after $N_q$ policy evaluation updates. We treat $N_q$ and batch size as a hyper parameters and sweep over a few reasonable values.
In order to handle different scales of rewards in the multiple domains, we used the streaming Z-normalization on the rewards, inspired by its use in Proximal Policy Optimization (PPO)~\cite{schulman2017proximal}.
In addition, the agent's policy is controlled by a(n inverse) temperature added as part of the softmax operator. The temperature is annealed from 1 to 0 over 1M steps to ensure adequate state space coverage. An additional entropy cost hyper-parameter is added as is standard practice with Deep RL policy gradient methods such as A3C~\cite{Mnih2016asynchronous,schulman2017proximal}.
For NFSP, we used the same values presented in~\cite{Lanctot17PSRO}.

% For MCRCFR, we tried to closely match the actor-critic methods. Detailed values are given in Appendix~\ref{sec:app-experiments}.

{\bf Convergence to Equilibrium.}
See Figure~\ref{fig:conv} for convergence results. Please note that we plot the \textsc{NashConv} for the average policy in the case of NFSP, and the current policy in the case of the policy gradient algorithms. We see that in 2-player Leduc, the actor-critic variants we tried are similar in performance; NFSP has faster short-term convergence but long-term the actor critics are comparable. Each converges significantly faster than A2C. % and MCRCFR.
However RMPG seems to plateau.

{\bf Performance Against Fixed Bots.}
We also measure the expected reward against  fixed bots, averaged over player seats. These bots, \textsc{cfr500}, correspond to the average policy after 500 iterations of CFR. QPG and RPG do well here, scoring higher than A2C and even beating NFSP in the long-term. 

\begin{figure}[t]
\centering
\begin{tabular}{cc}
\includegraphics[width=0.46\textwidth]{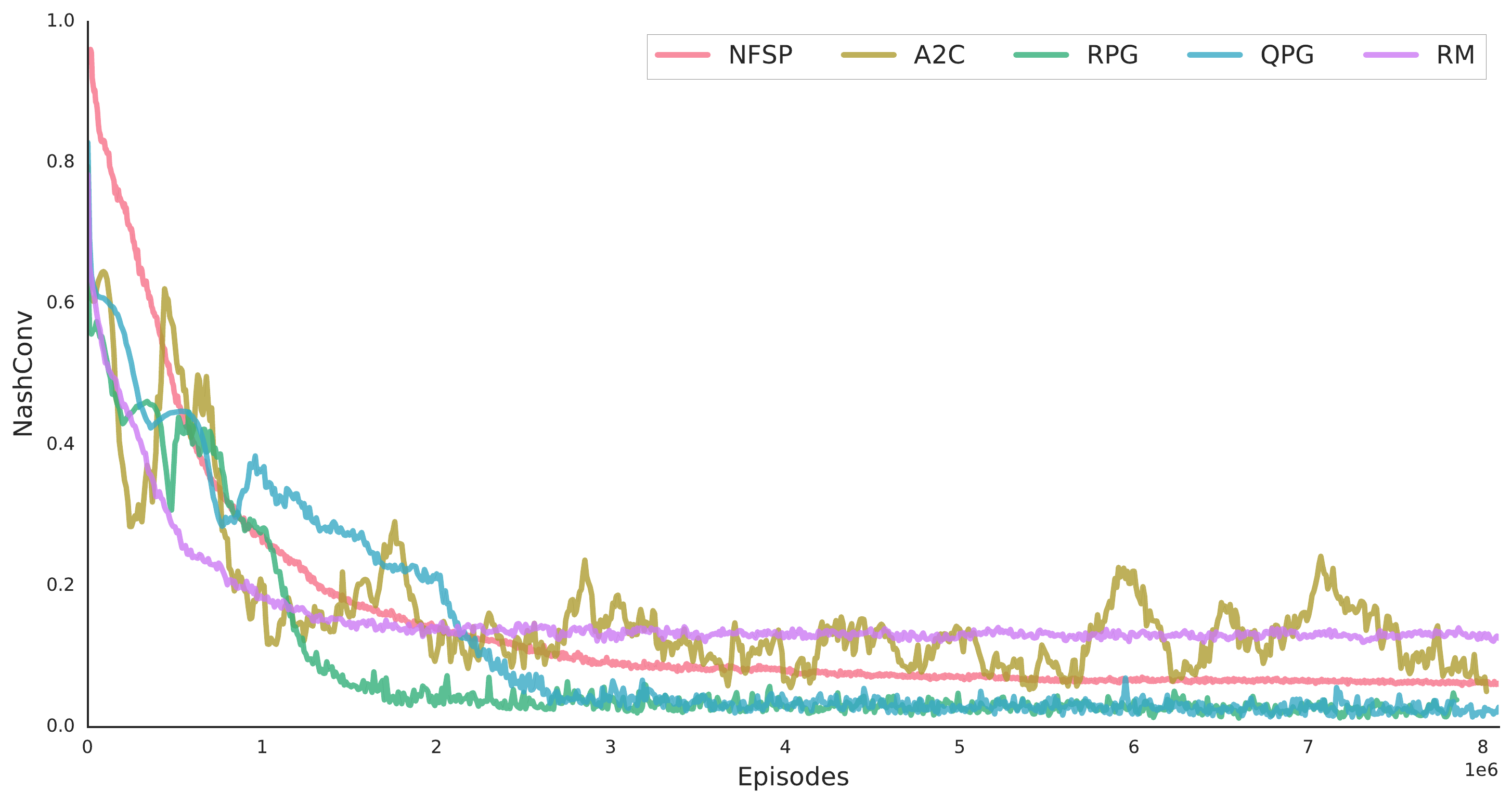} &
\includegraphics[width=0.46\textwidth]{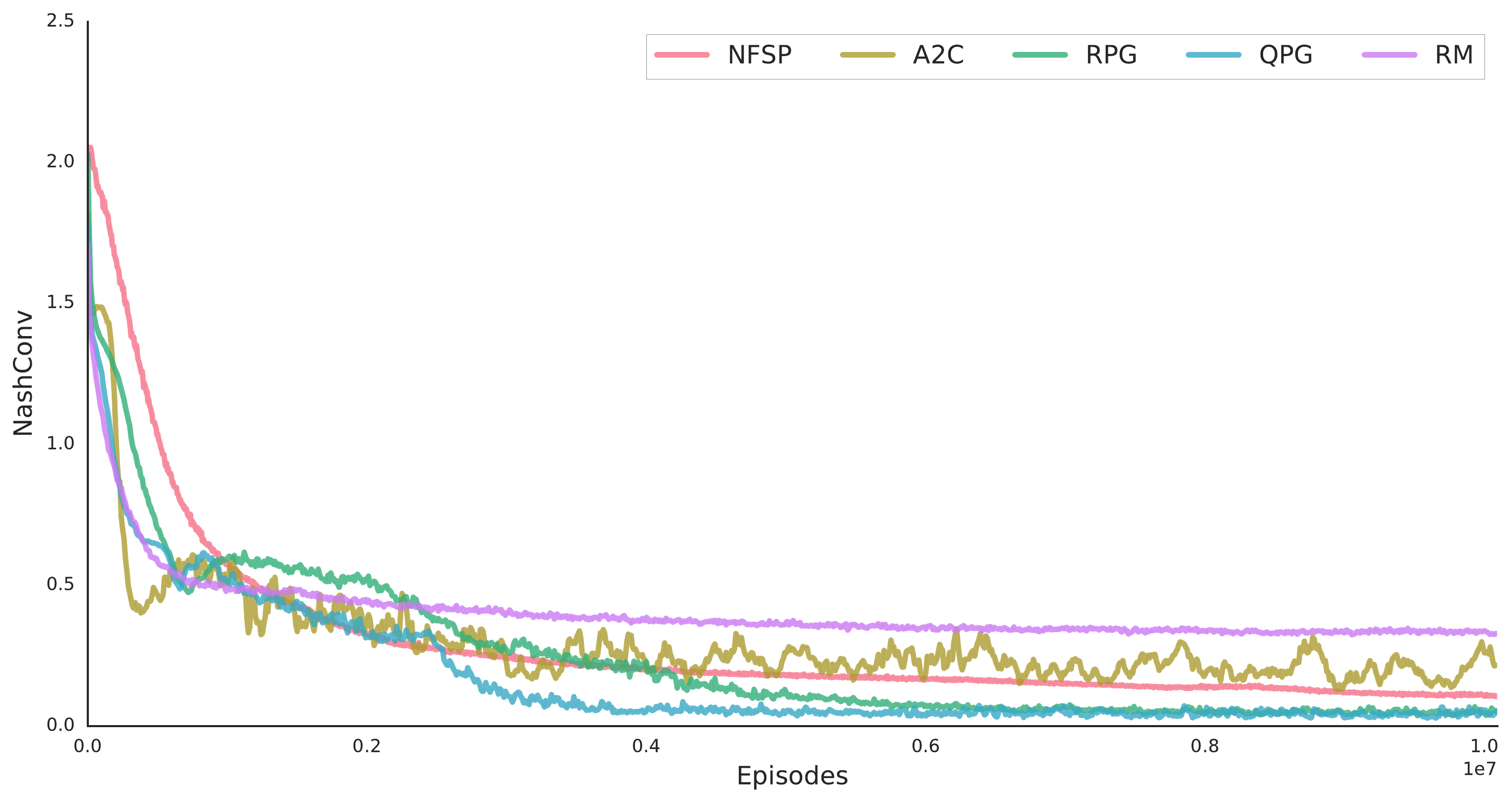} \\
\textsc{NashConv} in 2-player Kuhn  & \textsc{NashConv} in 3-player Kuhn \\
\includegraphics[width=0.46\textwidth]{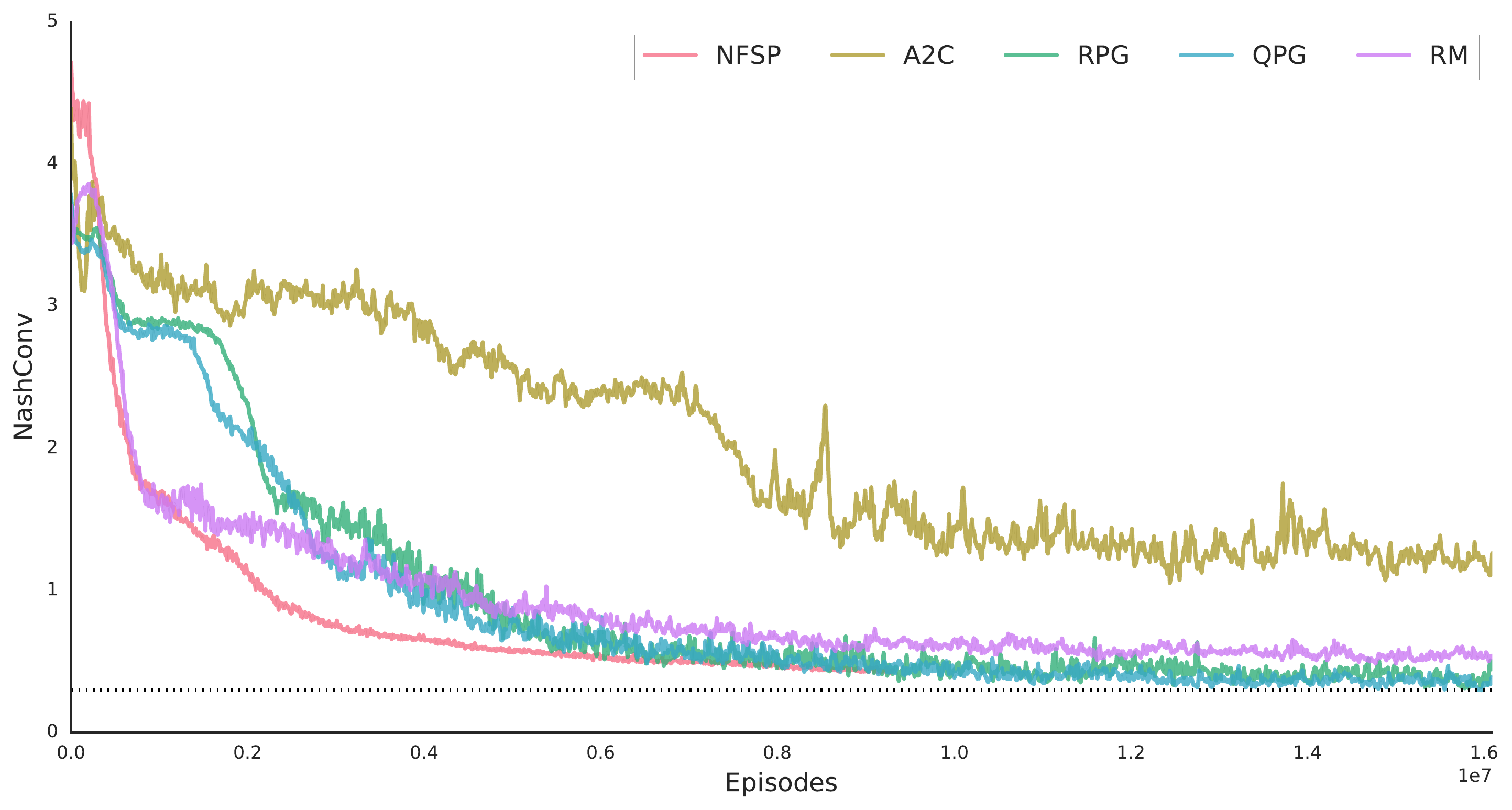} &
\includegraphics[width=0.46\textwidth]{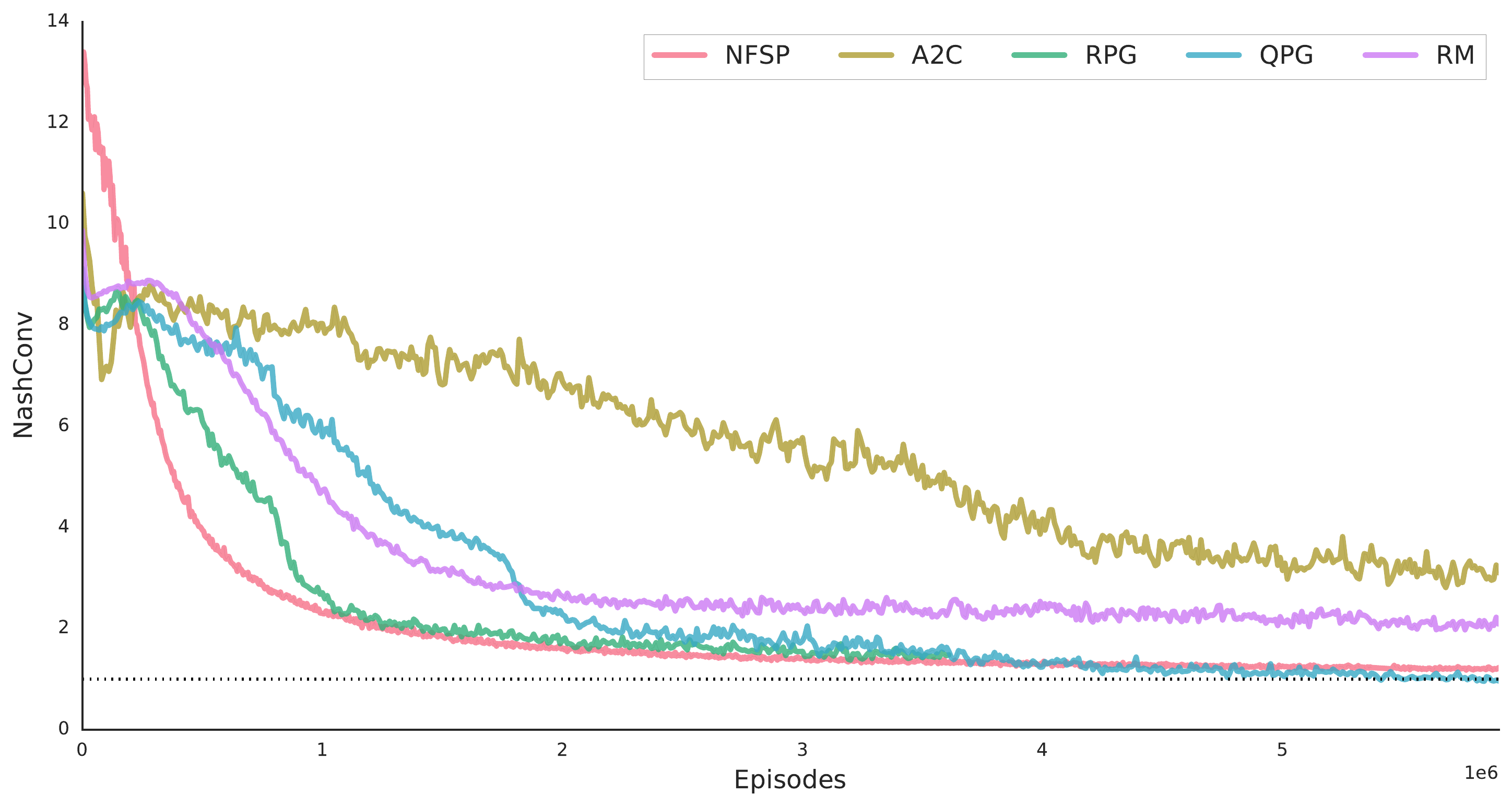} \\
\textsc{NashConv} in 2-player Leduc & \textsc{NashConv} in 3-player Leduc \\
 \includegraphics[width=0.46\textwidth]{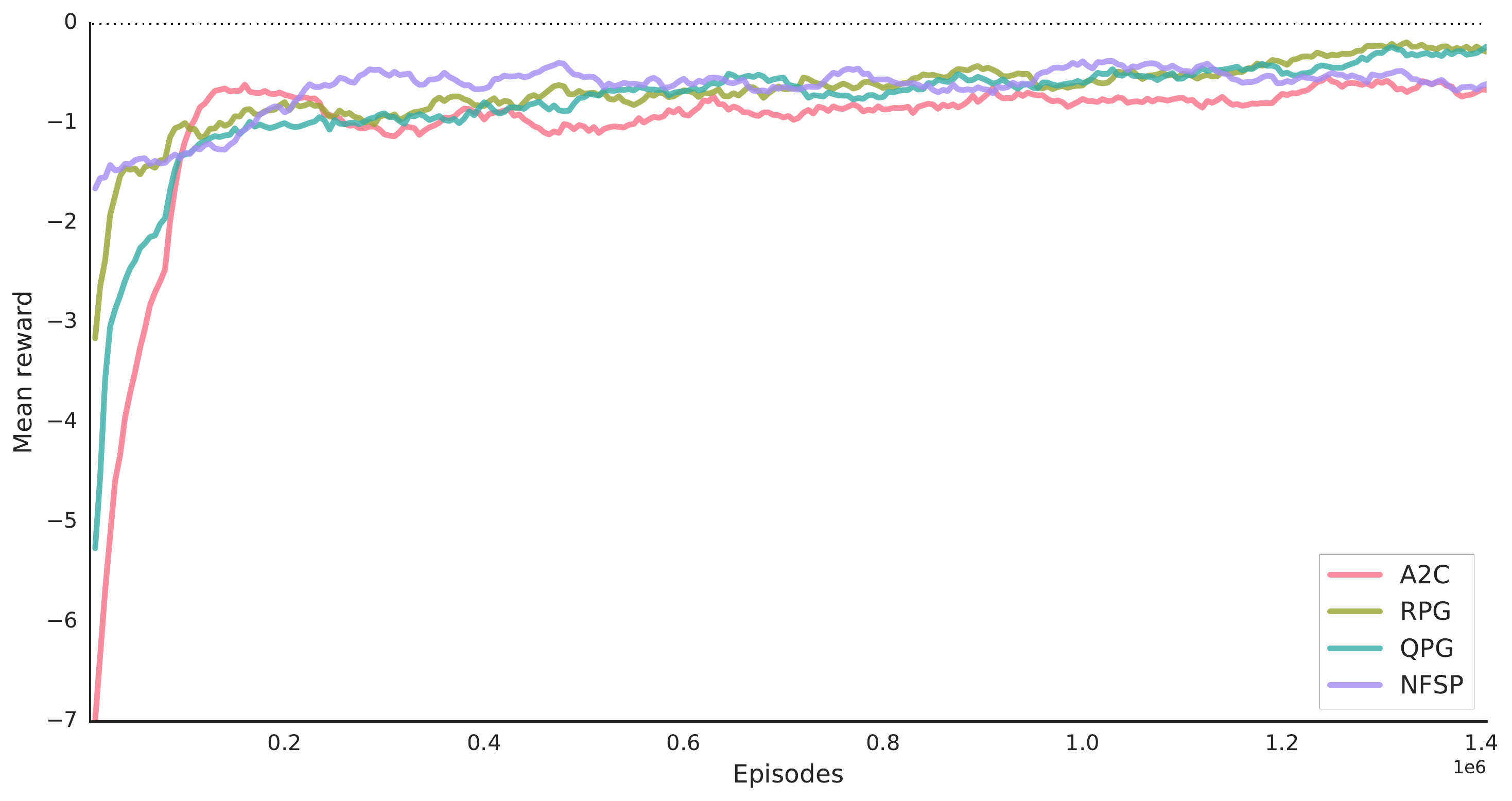} & 
 \includegraphics[width=0.46\textwidth]{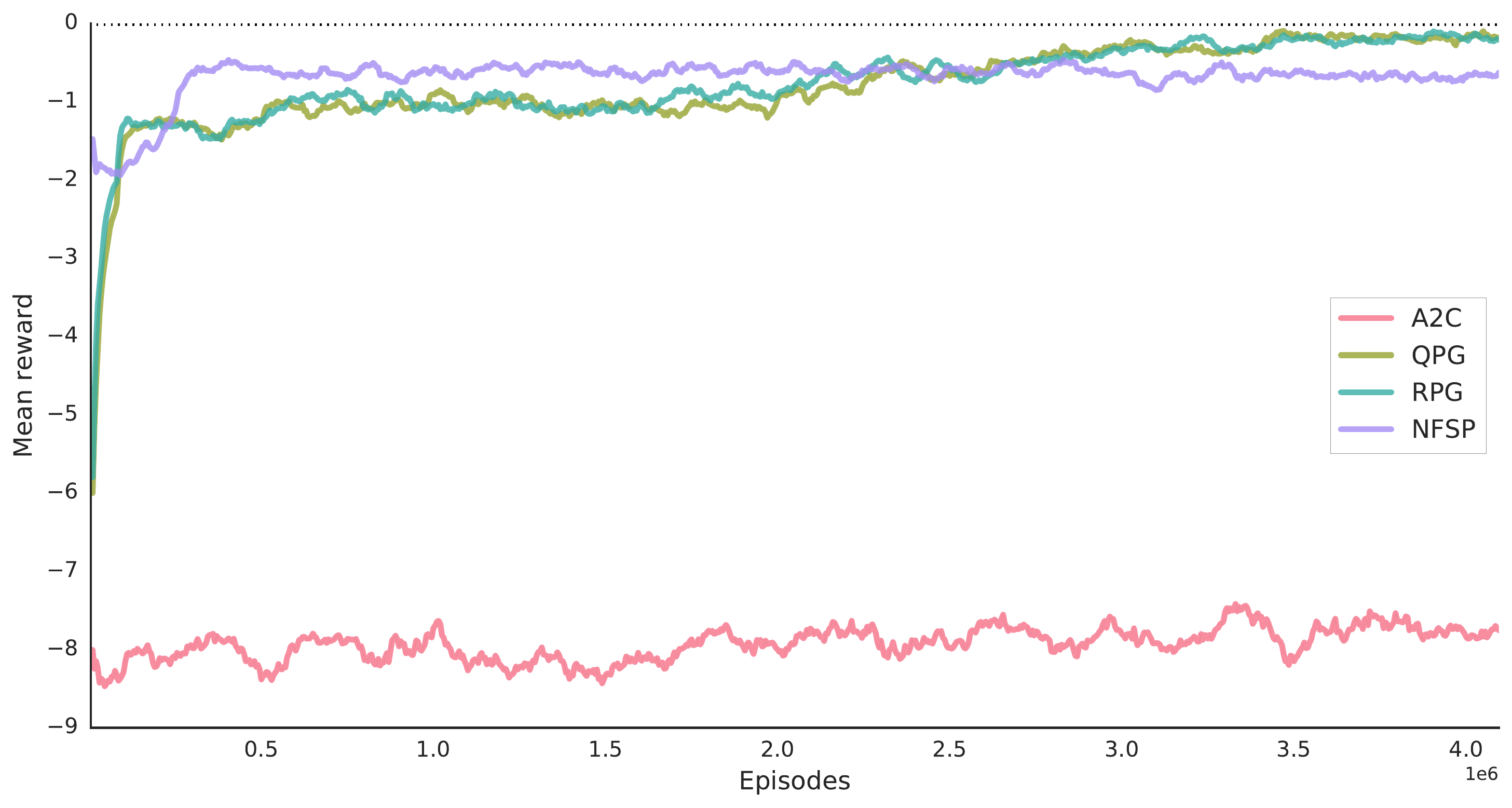} \\
 2-player Leduc vs. \textsc{CFR500} & 3-player Leduc vs \textsc{CFR500} \\
\end{tabular}  
\caption{Empirical convergence rates for \textsc{NashConv}($\bpi$) and performance versus CFR agents. \label{fig:conv}}
\end{figure}

%\vznote{Maybe we should move CFR500 plots to the appendix and use old experiments with only A2C, RPG and NFSP so we can plot 2p and 3p}.
%\mlnote{I think we move any plots, to appendix it should be Kuhn. This is way too much space allocated to Kuhn. Even if we only have a subset of the Leduc 2P vs bots, would be better to show them instead of Kuhn in the main paper.}

\section{Conclusion}

%In this paper, we did some great work. {\tt :)}

%Mention again the model-free setting. In this setting, the theory is stronger than NSFP. We discovered some surprising links to CFR. Actor-critic algorithms are very simple to implement, and hence may be preferred to NFSP.

In this paper, we discuss several update rules for actor-critic algorithms in multiagent reinforcement learning. One key property of this class of algorithms is that they are model-free, leading to a purely online algorithm, independent of the opponents and environment.
We show a connection between these algorithms and (counterfactual) regret minimization.

Our experiments show that these actor-critic algorithms converge to approximate Nash equilibria in commonly-used benchmark Poker domains with rates similar to or better than baseline model-free algorithms for zero-sum games. However, they may be easier to implement, and do not require storing a large memory of transitions. Furthermore, the current policy of some variants do significantly better than the baselines (including the average policy of NFSP) when evaluated against fixed bots. 
Of the actor-critic variants, RPG and QPG seem to outperform RMPG in our experiments.

As future work, we would like to formally develop the (probabilistic) guarantees of the sample-based on-policy Monte Carlo CFR algorithms and/or extend to continuing tasks as in MDPs~\cite{kash2019combining}. We are also curious about what role the connections between actor-critic methods and CFR could play in deriving convergence guarantees in model-free MARL for cooperative and/or potential games.
%\vznote{Removing the target networks regret matching stuff as discussed on hangouts}
%As future work, we would like to try another variant which trains the policy directly toward the output of a target policy derived from regret matching, \ie $\pi_{target}(s,a) = (a_\pi(s,a))^+ / \sum_b (\pi(s,b))^+$ where $a_\pi(s,a) = \q_\pi(s,a) - \sum_b \pi(s,b) q_\pi(s,b)$. In this regime, the critic $q_\pi$ would still be trained in the usual way, and the policy would be trained to minimize cross entropy with this target policy. This could have the benefit of keeping $\pi$ stochastic, as does regret matching. 
%

{\bf Acknowledgments.} We would like to thank Martin Schmid, Audr\={u}nas Gruslys, Neil Burch, Noam Brown, Kevin Waugh, Rich Sutton, and Thore Graepel for their helpful feedback and support.

\bibliography{rpg}
\bibliographystyle{plain}

\newpage

\appendix
{\Large {\bf Appendices}}

\section{Some Notes on Notation and Terminology \label{sec:terminology}}

Here we clarify some notational differences between the work on computational game theory and (multiagent) reinforcement learning.

There are some analogs between approximate dynamic programming and RL to counterfactual regret minimization in zero-sum games.

CFR is a policy iteration technique: it implements generalized policy iteration: policy evaluation computes the (counterfactual) values $v_\pi^c$. Policy ``improvement'' is implemented by the regret minimizers at each information state, such as regret-matching which yields the next policy $\pi_{k+1}(s)$ by assigning probabilities to each action $\pi_{k+1}(s,a)$ proportional to its thresholded cumulative regret $\textsc{tcreg}_i(k, \pi, a)$. There is one main difference: this improvement step is not (necessarily) a contraction on any distance to an optimal policy. However, the average policy $\bar{\pi}_k$ {\it does} converges due to the Folk theorem, so in some sense the policy update operator on $\pi_k$ is improving $\bar{\pi}_k$. We give more detail on CFR in the following subsection (Appendix~\ref{sec:cfr}).

Like standard value/policy iteration, CFR requires a full state space sweep each iteration. Intead, Monte Carlo sampling can be applied to get estimated values $\tilde{v}^c$~\cite{Lanctot09mccfr}. Then the equivalent policy update operator can be applied and there are probabilistic bounds on convergence to equilibrium. 

One main crticial point is that temporal difference bootstrapping from values recursively is not possible as the Markov property does not hold in general for information states in partially-observable multiagent environments: the optimal policy $\pi_i(s,a)$ at some state $s_t$ {\it does} generally depend on the policies at other information states.

POMDPs represent hidden state using belief states. They are different from information states, as they are paired with an associated distribution over the histories.

The following table shows a mapping between most-used terms that are analogous (mostly equivalent) but used within the two separate communities:

\begin{table}[h!]
\begin{center}
\begin{tabular}{|l|l|ll|}
\hline
Computational Game Theory & Reinforcement Learning  & This paper & Prev. paper(s)  \\
\hline
Player                  & Agent & $i$ & $i$\\
Information set         & Information state & $s$ & $I$    \\
Action (or move)        & Action & $a$ & $a$ \\
History                 & State & $h$ & $h$\\
Utility                 & Reward & $u, G$ & $u$ \\
Strategy                & Policy & $\pi$ & $\sigma$ \\
Reach probability       & \footnotemark[4] & $\eta$ & $\pi$ \\
Chance event probability & Transition probability & $\cT$ & $\sigma_c$ \\
Chance                  & Nature &  & \\
Imperfect Information   & Partial Observability & & \\
Extensive-form game     & Multiagent Environment\footnotemark[5] & & \\
Simultaneous-move/Stochastic Game  &  Markov/Stochastic Game & & \\
\hline
\end{tabular}
\end{center}
\caption{A mapping of analogous terms across fields. The last two columns show nomenclature used for instances of each, compared to the previous papers from computational game theory.}
\end{table}

\footnotetext[4]{There is no precise equivalent. The closest is the on-policy distribution in episodic tasks $\mu(s)$ described in \cite[Section 9.2]{Sutton18}.}
\footnotetext[5]{Also: finite-horizon Dec-POMDP, in the cooperative setting.}

\section{Counterfactual Regret Minimization \label{sec:cfr}}

As mentioned above, Counterfactial Regret Minimization (CFR) is a policy iteration technique with a different policy improvement step. In this section we describe the algorithm using the terminology as defined in this paper.
Again, it is presented in a slightly different way than from previous papers to emphasize the elements of policy iteration.
For an overview with more background, see~\cite[Chapter 2]{lanctot13phdthesis}. For a thorough introductory tutorial, including backgound and code exercises see~\cite{Neller13cfrnotes}.

\begin{algorithm2e}[h!]
\SetKwInOut{Input}{input}\SetKwInOut{Output}{output}
\Input{~~~$K$ -- number of iterations; $\pi^0$ -- initial uniform joint policy}
Initialize table of values $v_i^c(\pi, s, a) = v_i^c(\pi, s) = 0$ for all $s,a$ \;
Initialize cumulative regret tables $\mbox{CREG}(s,a) = 0$ for all $s,a$\;
Initialize average policy tables $S(s,a) = 0$ for all $s,a$\;
 \;
\textsc{PolicyEvalTreeWalk}(joint policy $\pi$, history $h$, player reach probs $\vec{\eta}$, chance reach $\eta_c$): \;
\If{$h$ is terminal}{
    {\bf return} utilites (episode returns) $\vec{u} = (u_i(h))$ for $i \in \cN$ \;
}
\ElseIf{$h$ is a chance node}{
    {\bf return} $\sum_{a \in \cA(h)}\Pr(ha | h) \cdot \textsc{PolicyEvalTreeWalk}(ha, i, \vec{\eta}, \Pr(ha | h) \cdot \eta_c)$ \;
}\Else{
    Let $i$ be the player to play at $h$ \;
    Let $s$ be the information state containing $h$ \;
    $\vec{u} \leftarrow \vec{0}$ \;
    \For{legal actions $a \in \cA(h)$}{
        $\eta_{-i} \leftarrow \eta_c \cdot \Pi_{j \not= i} \eta_j$ \;
        $\vec{\eta}' \leftarrow \vec{\eta}$ \;
        $\eta_i' \leftarrow \eta_i' \cdot \pi_i(s, a)$ \;
        $\vec{u}_a \leftarrow \textsc{PolicyEvalTreeWalk}(ha, \vec{\eta}', \eta_c)$ \;
        $v_i^c(\pi, s, a) \leftarrow v_i^c(\pi, s, a) + u_{a,i}~~~~~(i^{th} \mbox { component})$ \;
        $S(s,a) \leftarrow S(s,a) + \eta_i \cdot \pi(s,a)~~~~~(\mbox{policy improvement on average policy } \bar{\pi}$) \;
        $\vec{u} \leftarrow \vec{u} + \pi(s,a) \vec{u}_a$ \;
    }
    {\bf return} $\vec{u}$ \;
}
 \;
\textsc{PolicyEvaluation}(iteration $k$): \;
Let $h$ be the initial empty history \;
PolicyEvalTreeWalk($\pi^k$, $h$, $\vec{1}$, $\vec{1}$) \;
\For{all $s$}{
   Let $i$ be the player to play at $s$ \;
   $v_i^c(\pi^k, s) = \sum_b \pi^k(s,b) v_i^c(\pi^k, s, b)$ \;
}
 \;
\textsc{RegretMatching}(information state $s$): \;
Define thresholded cumulative TCREG$(s,a) = (\mbox{CREG}(s,a))^+$ \;
$d \leftarrow \sum_{b} \mbox{TCREG}(s,b)$ \;
\For{$a \in \cA(s)$}{
  $\pi(s,a) \leftarrow \frac{\mbox{TCREG}(s,a)}{d}$ if $d > 0$ otherwise $\frac{1}{|\cA(s)|}$ \;
}
{\bf return} $\pi(s)$ \;
 \;
\textsc{PolicyUpdate}(iteration $k$): \;
\For{all $s$}{
    Let $i$ be the player to play at $s$ \;
    \For{$a \in \cA(s)$}{
        $\mbox{CREG}(s,a) \leftarrow \mbox{CREG}(s,a) + (v_i^c(\pi^k, s, a) - v^c_i(\pi^k, s))$ \;
    }
    $\pi^{k+1}(s) \leftarrow \mbox{RegretMatching}(s)$ \label{alg:cfr-rm}\;
}
 \;
\For{$k \in \{1, 2, \cdots, K \}$}{
   Set all the counterfactual values $v_i^c(\pi^k, s, a) = v_i^c(\pi^k, s) = 0$ \;
   PolicyEvaluation($k$) \;
   PolicyUpdate($k$) \;
}
\For{all $s$}{
  $\bar{\pi}^T(s,a) = \frac{S(s,a)}{\sum_b S(s,b)}$ \;
}
{\bf return} $\bar{\pi}^T$ \;
\caption{Vanilla CFR \label{alg:cfr}}
\end{algorithm2e}

\section{Regret-based Policy Gradients: Algorithm Pseudo-Code \label{sec:pseudocode}}

The algorithm is similar in form to A2C~\cite{Mnih2016asynchronous}. The differences are:
\begin{enumerate}
\item Gradient {\it descent} is used with $\nabla_{\btheta}^{\textsc{RPG}}$ instead of gradient ascent in $\nabla_{\btheta}^{\textsc{QPG}}$, 
$\nabla_{\btheta}^{\textsc{RMPG}}$, and A2C.
\item An (action,value) $q$-function critic is used in place of the usual state baseline $v$.
\end{enumerate}

The pseudo-code is presented in Algorithm~\ref{alg:rpg}.

\begin{algorithm2e}[h!]
\SetKwInOut{Input}{input}\SetKwInOut{Output}{output}
\Input{$\pi$ -- policy; $s_0$ -- initial state}
\Repeat{$T > T_{max}$}{
  Reset gradients: $d \btheta \leftarrow 0$, and $d \bw \leftarrow 0$. \;
  $t_{start} \leftarrow t$ \;
  \Repeat{terminal $s_t$ {\bf or} $t - t_{start} = t_{max}$}{
    Sample $a_t \sim \pi(\cdot~|~s_t, \btheta)$ \;
    Take action $a_t$ and receive reward $r_t$ and $s_{t+1}$ \;
    $t \leftarrow t + 1$ \;
    $T \leftarrow T + 1$ \;
  }
  $G \leftarrow \left\{ \begin{array}{ll}
         0 & \mbox{if $s_t$ is terminal};\\
        \sum_{a \in A} \pi(a~|~s_t, \btheta) Q(s, a; \bw) & \mbox{otherwise}.\end{array} \right.$ \;
  \For{$i \in \{t - 1, \ldots, t_{start} \}$}{
    $G \leftarrow r_i + \gamma G$ \;
    Acc. policy gradients: $d \btheta \leftarrow d \btheta + \delta$, \mbox{where $\delta$ is one of $\{\nabla_{\btheta}^{\textsc{QPG}}$,
    $\nabla_{\btheta}^{\textsc{RPG}}$, $\nabla_{\btheta}^{\textsc{RMPG}} \}$ from Sec. \ref{sec:rpg}}\;
    Acc. $q$-value function gradients: $d \bw \leftarrow d \bw + \nabla_\bw (G - q(s_i, a_i; \bw))^2$ \;
  }
  Update critic: $\bw \leftarrow \bw - \alpha d \bw$ \; \label{alg:policy-update}
  Update actor: $\btheta \leftarrow \btheta + \alpha d \btheta$ \;
}
\caption{Generalized Advantage Actor-Critic with (state,action) critics. \label{alg:rpg}}
\end{algorithm2e}

In this paper we focus on episodes of bounded length and $t_{max}$ is greater than the maximum number of moves per episode. So there is no TD-style bootstrapping from other values. In environments with longer episodes, it might be necessary to truncate the sequence lengths as is common in Deep RL.

\section{Analysis of Regret Dynamics in Matrix Games \label{sec:app-dynamics}}

In Tables \ref{tab:mp}, \ref{tab:rps}, and  \ref{tab:rpsbias} we show the three games under study, i.e., matching pennies (MP), rock-paper-scissors (RPS), and a skewed version of the latter, called bias rock-paper-scissors (bRPS) from \cite{Bosansky16Algorithms}.
In Figures \ref{MP}, \ref{RPS} and \ref{RPSbias} we illustrate several dynamics in these respective games.
More precisely, we show classical Replicator Dynamics (RD) as a reference point (row a), RPG Dynamics (row b), and time average RPG Dynamics plots (row c), sorted row by row. 
As can be observed from the figures, and as is well known, the RD cycle around the mixed Nash equilibrium (indicated by a yellow dot) in all three games, see row (a). The RPG dynamics cycle as well, though in a slightly different manner than RD as can be seen from row (b). Finally, in row (c) we show the average time RPG dynamics. Interestingly these plots show that in all three cases the learning dynamics converge to the mixed Nash equilibrium. These final plots illustrate that the average intended behavior of RPG converges to the mixed Nash equilibrium and that the RPG algorithm is regret-minimizing in these specific normal-form games \cite{Hofbauer09}.

\subsection{Normal Form Games Dynamical Systems}
\label{sec:NFG_DynamicalSystem}
Here we present the dynamical systems that describe each policy gradient update rule in two-player matrix games. For futher detail on the construction and analysis of these dynamical systems, see~\cite{BloembergenTHK15,HofbauerSigmund98}.

Let us recall the updated we consider in this paper.

QPG:
\begin{equation}
    \nabla_{\btheta}^{\textsc{QPG}}(s) = \sum_a [\nabla_\theta \pi(s, a; \btheta)] \left(q(s,a; \bw) - \sum_b \pi(s, b; \btheta)  q(s,  b, \bw)\right).
\label{eq:qac}
\end{equation}

RPG:
\begin{equation}
    \nabla_{\btheta}^{\textsc{RPG}}(s) = -\sum_a \nabla_\theta \left(q(s,a; \bw) - \sum_b \pi(s, b; \btheta) q(s,b; \bw)\right)^+.
\label{eq:rpg}
\end{equation}

RMPG:
\begin{equation}
    \nabla_{\btheta}^{\textsc{RMPG}}(s) = \sum_a [\nabla_\theta \pi(s, a; \btheta)] \left(q(s,a; \bw) - \sum_b \pi(s, b; \btheta) q(s, b, \bw)\right)^+.
\label{eq:rmpg}
\end{equation}

Let us consider that the game is in normal form and let us suppose that the policy is only parametrized by logits. The parameter will be $\btheta = (\theta_a)_a$ and $\pi(a; \btheta) = \frac{\exp(\theta_a)}{\sum_a \exp(\theta_a)}$ in a state less game. It follows that:

\begin{equation}
    \frac{d \pi(a;\btheta)}{d \theta_b} = 1_{a=b} \pi(a;\btheta) - \pi(a;\btheta)\pi(b;\btheta)
\label{eq:diff_pi}
\end{equation}

\begin{equation}
    \dot{\pi}(a; \btheta) = \sum \limits_{b} \frac{d \pi(a;\btheta)}{d \theta_b} \dot{\theta}_b= \pi(a; \btheta)(\dot{\theta}_a - \sum \limits_b \pi(b; \btheta) \dot{\theta}_b)
\label{eq:diff_t}
\end{equation}

\subsubsection{QPG}

The dynamical system followed by QPG on a normal form game can be written as follow:
\begin{equation}
    \dot{\btheta} = \sum_a [\nabla_\theta \pi(a; \btheta)] \left(q(a; \bw) - \sum_b \pi(b; \btheta)  q(b, \bw)\right)
\end{equation}
\begin{equation}
    \dot{\btheta} = \sum_a \nabla_\theta \pi(a; \btheta) q(a; \bw) \textit{ because $\sum_a \pi(a; \btheta) = 1$}
\label{eq:qac_no_baseline}
\end{equation}
\begin{equation}
    \dot{\theta_b} = \sum_a \frac{d \pi(a; \btheta)}{d \theta_b}  q(a; \bw) = \pi(b; \btheta) \left(q(b; \bw) - \sum \limits_a \pi(a; \btheta) q(a; \bw) \right) = \pi(b; \btheta) A(b, \btheta, \bw)
\end{equation}

Final dynamical system:
\begin{align}
    &\dot{\pi}(a; \btheta) = \pi(a; \btheta) \left(\pi(a; \btheta) A(a, \btheta, \bw) - \sum \limits_b \pi(b; \btheta)^2 A(b, \btheta, \bw) \right)\\
    & \quad \textit{ where $A(a, \btheta, \bw) = q(a; \bw) - \sum \limits_b \pi(b; \btheta) q(b; \bw)$}
\end{align}

\subsubsection{RPG}

\begin{equation}
    \nabla_{\btheta}^{\textsc{RPG}}(s) = -\sum_a \nabla_\theta \left(q(s,a; \bw) - \sum_b \pi(s, b; \btheta) q(s,b; \bw)\right)^+
\end{equation}
\begin{equation}
    \nabla_{\btheta}^{\textsc{RPG}}(s) = -\sum_a \nabla_\theta 1_{A(a, \btheta, \bw) \geq 0}\left(q(s,a; \bw) - \sum_b \pi(s, b; \btheta) q(s,b; \bw)\right)
\end{equation}
\begin{equation}
    \nabla_{\btheta}^{\textsc{RPG}}(s) = -\sum_a 1_{A(a, \btheta, \bw) \geq 0} \nabla_\theta \left(q(s,a; \bw) - \sum_b \pi(s, b; \btheta) q(s,b; \bw)\right)
\end{equation}
\begin{equation}
    \nabla_{\btheta}^{\textsc{RPG}}(s) = \sum_a 1_{A(a, \btheta, \bw) \geq 0} \sum_b \nabla_\theta \pi(b;\btheta) q(b;\bw)
\end{equation}
\begin{equation}
    \nabla_{\btheta}^{\textsc{RPG}}(s) = \underbrace{\left(\sum_a 1_{A(a, \btheta, \bw) \geq 0}\right)}_{\textrm{$n_{a+}$}} \underbrace{\sum_b \nabla_\theta \pi(b;\btheta) q(b;\bw)}_{\textrm{$\nabla_{\btheta}^{\textsc{QPG}}$ from~\ref{eq:qac_no_baseline}}}
\label{eq:rpg_logit}
\end{equation}
\begin{equation}
    \nabla_{\btheta}^{\textsc{RPG}}(s) = n_{a+} \nabla_{\btheta}^{\textsc{QPG}}
\label{eq:rpg_logit}
\end{equation}
it falls that the dynamical system of RPG is:
\begin{equation}
    \dot{\pi}(a; \btheta) = n_{a+} \left[ \pi(a; \btheta) \left(\pi(a; \btheta) A(a, \btheta, \bw) - \sum \limits_b \pi(b; \btheta)^2 A(b, \btheta, \bw) \right) \right]
\end{equation}

\begin{table}[h!]
%\vspace{-0.5cm}
%\footnotesize
	%\centering
\begin{minipage}{0.30\textwidth}
   \begin{game}{2}{2}[][]
   	  &  $H$     &  $T$   \\
   	 H  &    $1,-1$      & $-1,1$ \\
   	 T &  $-1,1$ & $1,-1$ \\
   	 \\
   \end{game}
   \vspace{0.2cm}
   \caption{\footnotesize Matching Pennies.}
   \label{tab:mp}
\end{minipage}% `
\qquad
\begin{minipage}{0.30\textwidth}
   \begin{game}{3}{3}[][]
   	  &  $R$     &  $P$   & $S$  \\
   	 $R$  &    $0$      & $-1$ & $1$\\
   	 $P$ &  $1$ & $0$ & $-1$\\
   	 $S$ & $-1$ & $1$ & $0$\\
   \end{game}
   \vspace{0.2cm}
   \caption{\footnotesize Rock-Paper-Scissors.}
   \label{tab:rps}
\end{minipage}% `
\qquad
\begin{minipage}{0.30\textwidth}
\begin{game}{3}{3}[][]
   	  &  $R$     &  $P$   & $S$  \\
   	 $R$  &    $0$      & $-0.25$ & $0.5$\\
   	 $P$ &  $0.25$ & $0$ & $-0.05$\\
   	 $S$ & $-0.5$ & $0.05$ & $0$\\
   \end{game}
   \vspace{0.2cm}
  \caption{\footnotesize Bias RPS.}\label{tab:rpsbias}
   \end{minipage}
%\caption{\small Left - first counterpart game of the PSRO empirical game. Right - second counterpart game of the PSRO empirical game.}\label{tab:CPtables}
%\vspace{-0.8cm}
\end{table}

\begin{figure*}[h!]
\vspace{-0.3cm}
\centering
\begin{minipage}{0.33\textwidth}
\caption{\footnotesize Matching Pennies}\label{MP}
    \begin{subfigure}{\textwidth}
    \centering
    %\vspace{-0.1cm}
    \includegraphics[width=\textwidth]{MatchingPenniesRDdynamics.png}
   % \vspace{-1cm}
    \caption{\footnotesize Replicator Dynamics}
    \end{subfigure}\\
    \begin{subfigure}{\textwidth}
    \centering
    %\vspace{-0.1cm}
    \includegraphics[width=\textwidth]{MatchingPenniesRPGDynamics.png}
   % \vspace{-1cm}
    \caption{\footnotesize RPG dynamics}
    \end{subfigure} \\ 
    \begin{subfigure}{\textwidth}
    \centering
    %\vspace{-0.1cm}
    \includegraphics[width=\textwidth]{MatchingPenniesAverageRPGdynamics.png}
    %\vspace{-1cm}
   \caption{\footnotesize Average RPG dynamics}
    \end{subfigure}
    %\vspace{-0.3cm}
    
\end{minipage}%
\hfill
\begin{minipage}{.33\textwidth}
\caption{\footnotesize Rock-Paper-Scissors}\label{RPS}
    \begin{subfigure}{\textwidth}
    \centering
    %\vspace{-0.1cm}
    \includegraphics[width=\textwidth]{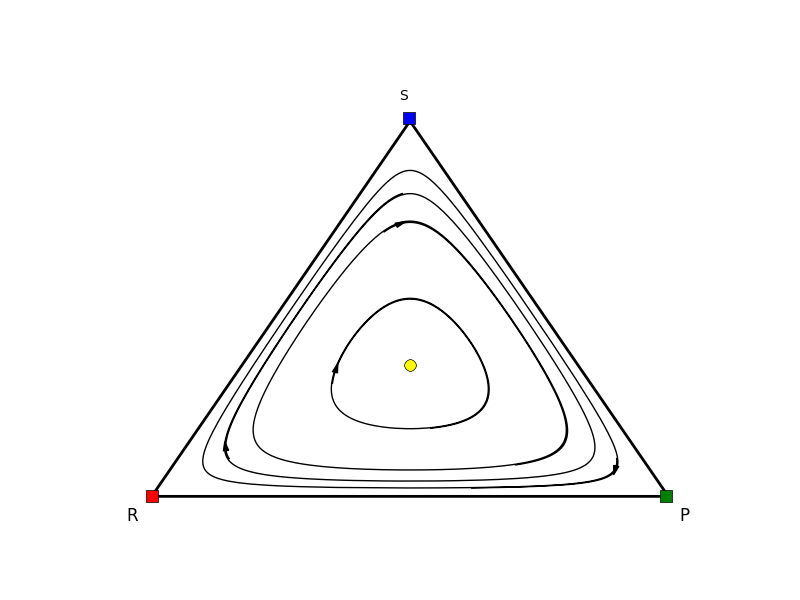}
    %\vspace{-1cm}
    \caption{\footnotesize Replicator Dynamics}
    \end{subfigure}\\
    \begin{subfigure}{\textwidth}
    \centering
  % \vspace{-0.1cm}
    \includegraphics[width=\textwidth]{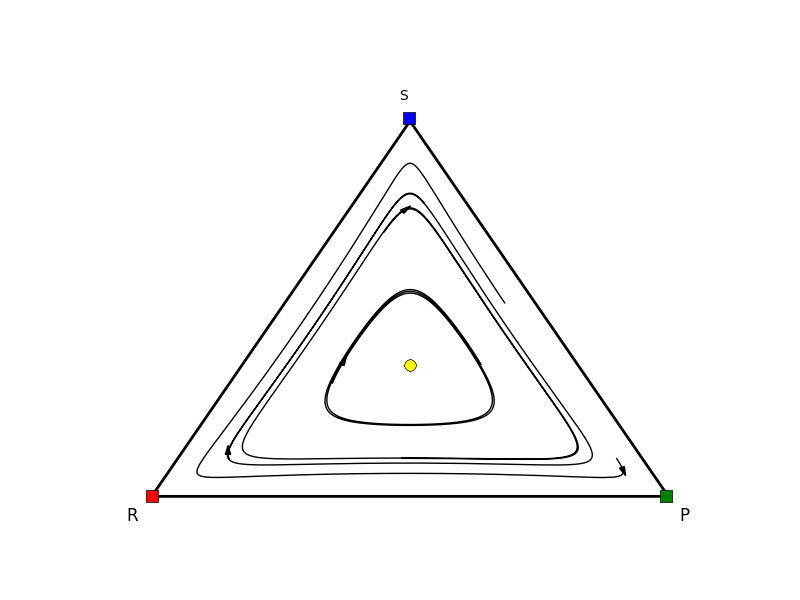}
    %\vspace{-1cm}
    \caption{\footnotesize RPG dynamics}
    \end{subfigure}\\ 
    \begin{subfigure}{\textwidth}
    \centering
  % \vspace{-0.1cm}
    \includegraphics[width=\textwidth]{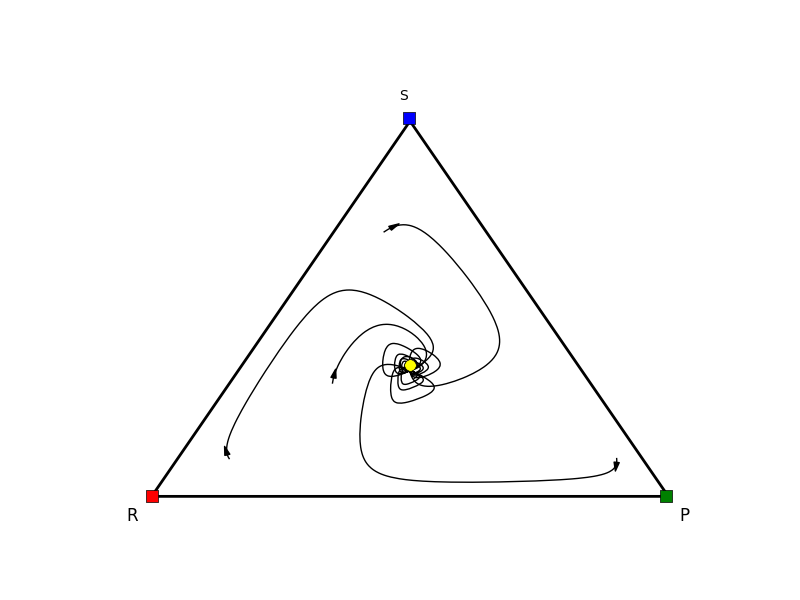}
   % \vspace{-1cm}
    \caption{\footnotesize Average RPG dynamics}
    \end{subfigure}
   % \vspace{-0.3cm}
    
\end{minipage}
\hfill
\begin{minipage}{.33\textwidth}
\caption{\footnotesize Bias Rock-Paper-Scissors}\label{RPSbias}
    \begin{subfigure}{\textwidth}
    \centering
    %\vspace{-0.1cm}
    \includegraphics[width=\textwidth]{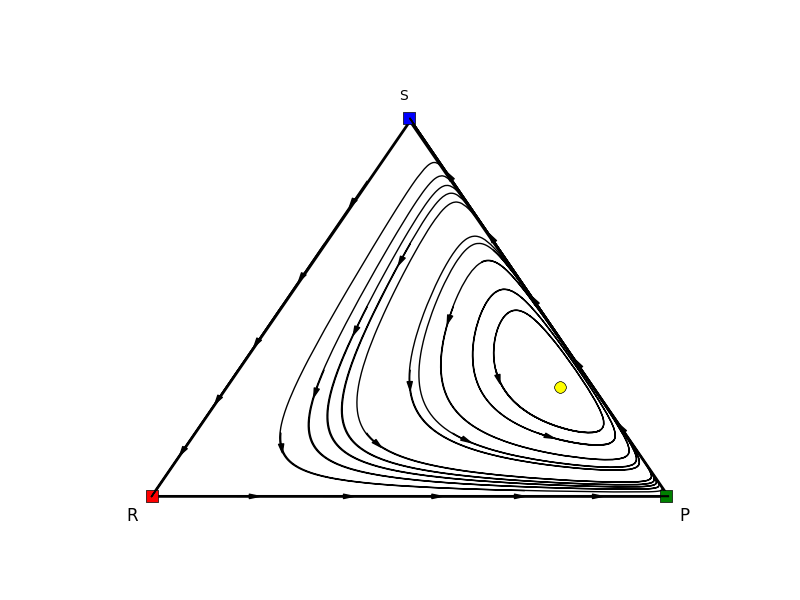}
    %\vspace{-1cm}
    %\label{TP-arp-avp-arv}
    \caption{\footnotesize Replicator Dynamics}
    \end{subfigure}\\
    \begin{subfigure}{\textwidth}
    \centering
  % \vspace{-0.1cm}
    \includegraphics[width=\textwidth]{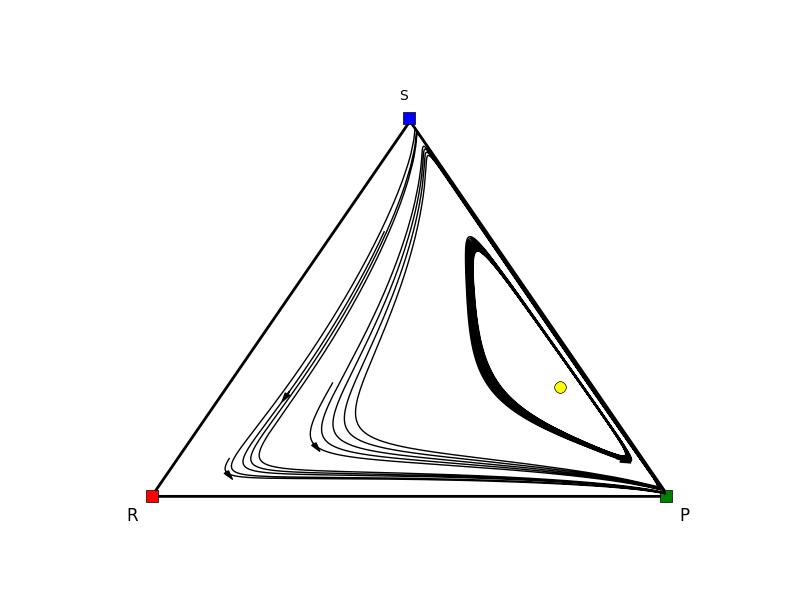}
   % \vspace{-1cm}
    \caption{\footnotesize RPG dynamics}
    \end{subfigure}\\ 
    \begin{subfigure}{\textwidth}
    \centering
  % \vspace{-0.1cm}
    \includegraphics[width=\textwidth]{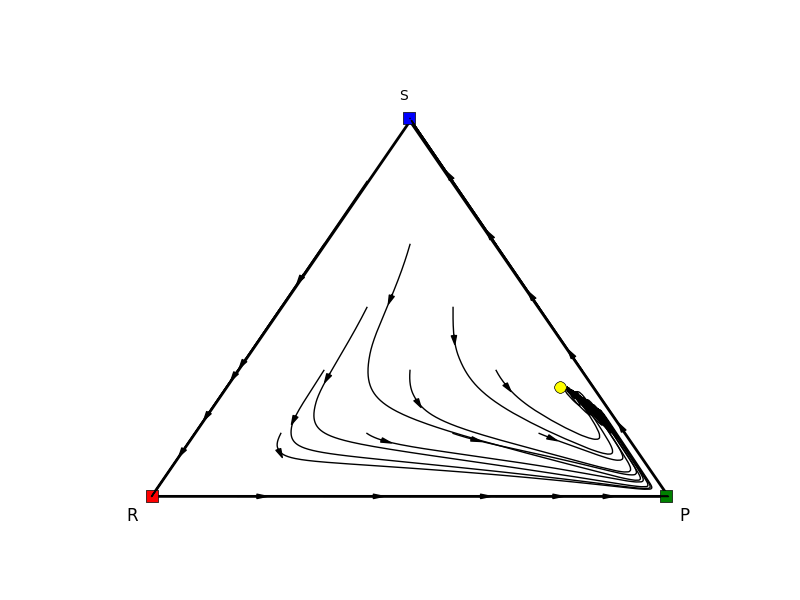}
    %\vspace{-1cm}
    \caption{\footnotesize Average RPG dynamics}
    \end{subfigure}
   % \vspace{-0.3cm}
    
\end{minipage}
\end{figure*}

\subsection{Generalised Rock-Paper-Scissors Game}

For the sake of completeness we also looked at the behavior of the dynamics in the generalised Rock-Paper-Scissors (gRPS) game \cite{Qian,Shapley}. More precisely, the gRPS game can be described as illustrated in Table \ref{tab:gRPS}.

\begin{table}[h!tb]
%\vspace{-0.5cm}
%\footnotesize
	\centering
%\begin{minipage}{0.30\textwidth}
   \begin{game}{3}{3}[][]
   	  &  $R$     &  $P$   & $S$  \\
   	 $R$  &    $1$      & $0$ & $2$\\
   	 $P$ &  $2$ & $1$ & $0$\\
   	 $S$ & $0$ & $2$ & $1$\\
   \end{game}
   \vspace{0.2cm}
   \caption{\footnotesize Generalized Rock-Paper-Scissors.}
   \label{tab:gRPS}
%\end{minipage}
\end{table}

We describe the dynamics in this game for replicator dynamics, RPG dynamics, and both replicator and RPG dynamics as average time dynamics plots. As in the RPS game, the replicator dynamics and RPG dynamics cycle around the Nash equilibrium, and the average time replicator dynamics and average time RPG dynamics converge to the Nash equilibrium, as illustrated in Figures \ref{fig:genRD}, \ref{fig:genRPG}, \ref{fig:genavRD} and \ref{fig:genavRPG}. A more detailed description on the convergence properties of replicator equations in this game can be found in \cite{Qian}. %In Figures \ref{fig:shapavRD} and \ref{fig:shapavRPG} we then illustrate the average time dynamics for respectively replicator and RPG dynamics. As can be observed, these average time dynamics converge more directly to the mixed Nash equilibrium.

\begin{figure*}[!tbp]
\vspace{-0.3cm}
\centering
\begin{minipage}{0.32\textwidth}
    \centering
    %\vspace{-0.1cm}
    \includegraphics[width=\textwidth]{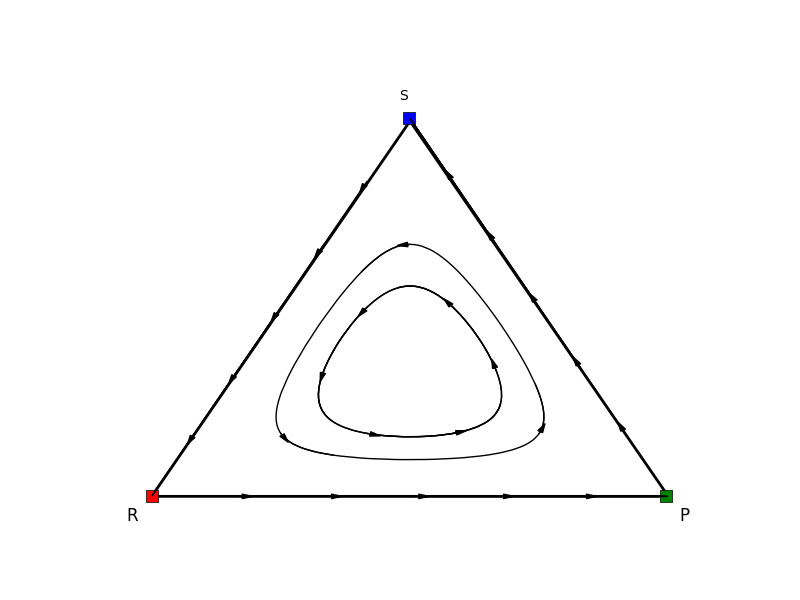}
   % \vspace{-1cm}
    \caption{\footnotesize Replicator Dynamics in the gRPS Game}\label{fig:genRD}
    \end{minipage}
    ~~~
    \begin{minipage}{0.32\textwidth}
    \centering
    %\vspace{-0.1cm}
    \includegraphics[width=\textwidth]{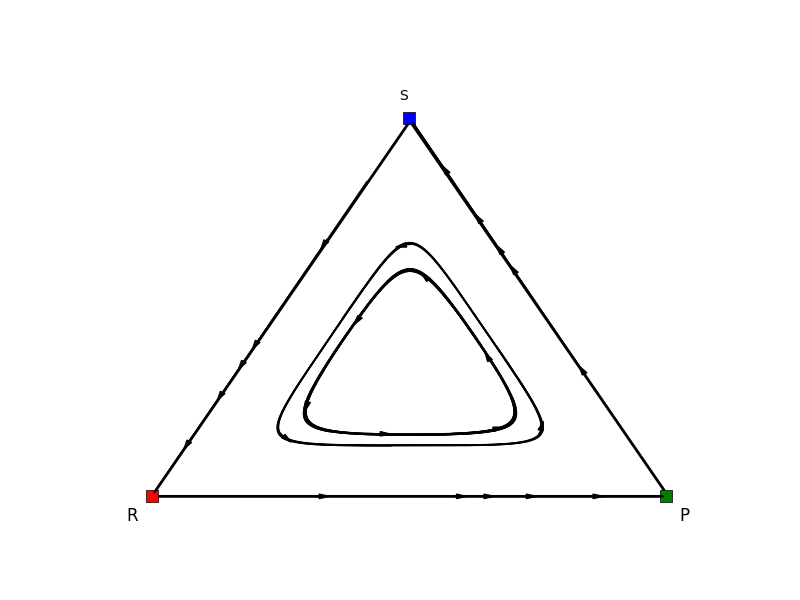}
   % \vspace{-1cm}
    \caption{\footnotesize RPG dynamics in the gRPS Game}\label{fig:genRPG}
    \end{minipage} 
\end{figure*}
\begin{figure*}[!tbp]
\vspace{-0.3cm}
\centering
    \begin{minipage}{0.32\textwidth}
    \centering
    %\vspace{-0.1cm}
    \includegraphics[width=\textwidth]{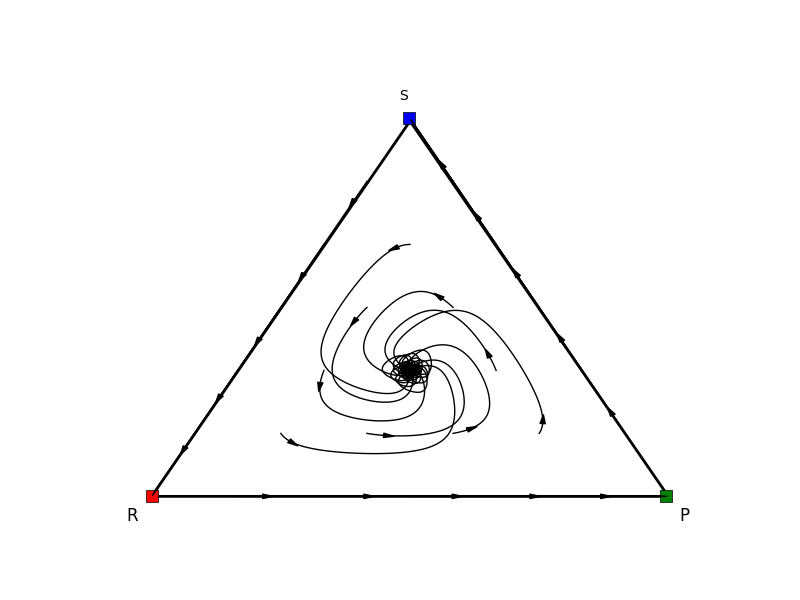}
    %\vspace{-1cm}
   \caption{\footnotesize Avg. Replicator Dynamics in the gRPS Game}\label{fig:genavRD}
   \end{minipage}
   ~~~
   \begin{minipage}{0.32\textwidth}
    \centering
    %\vspace{-0.1cm}
    \includegraphics[width=\textwidth]{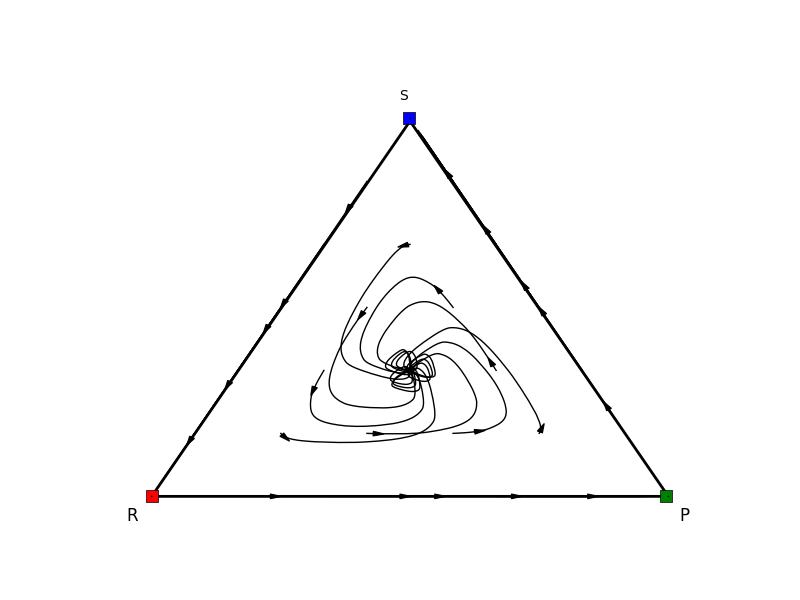}
    %\vspace{-1cm}
   \caption{\footnotesize Avg. RPG Dynamics in the gRPS Game}\label{fig:genavRPG}
    \end{minipage}
    %\vspace{-0.3cm}
  \end{figure*}

\newpage

\section{Sequential Partially-Observable Case \label{sec:app-sequential}}

Let $P$ be defined as in Theorem~\ref{thm:pg-cfr-conv}. We first define the four update rules that we will discuss in this section.
On iteration $k$, at state $s$, the update to the policy parameters are:
\[
\mbox{Projected PGPI}: \theta^{k+1}_{s, \cdot} \leftarrow P( \{ \theta^k_{s,a} + \alpha_{s,k} \frac{\partial}{\partial \theta^k_{s,a}} J^{PG}(\pi_{\btheta^k}) \}_a )
\]
\[
\mbox{Projected ACPI}: \theta^{k+1}_{s, \cdot} \leftarrow P( \{ \theta^k_{s,a} + \alpha_{s,k} \frac{\partial}{\partial \theta^k_{s,a}} J^{AC}(\pi_{\btheta^k}) \}_a )
\]
\[
\mbox{Projected Strong PGPI}: \theta^{k+1}_{s, \cdot} \leftarrow P( \{ \theta^k_{s,a} + \alpha_k \frac{\partial}{\partial \theta^k_{s,a}} J^{PG}(\pi_{\btheta^k}, s) \}_a )
\]
\[
\mbox{Projected Strong ACPI}: \theta^{k+1}_{s, \cdot} \leftarrow P( \{ \theta^k_{s,a} + \alpha_k \frac{\partial}{\partial \theta^k_{s,a}} J^{AC}(\pi_{\btheta^k}, s) \}_a )
\]

Tabular policies are represents in behavioral strategy form: a probability is a weight $\theta_{s,a}$ per state-action, where the weights obey simplex constraints: $\forall s, \sum_a \theta_{s,a} = 1$.

In turn-based games, the gradient of a tabular policy, $\nabla_{\btheta} \pi_\theta$ is then simply a sum of partial derivates with respect to each specific weight $\theta_{s,a}$.

The score function $J^{PG}(\pi_{\btheta}) = \bE_{\rho \sim \bpi}[ G_0~|~S_0 = s_0 ] = \sum_{z \in \cZ} \eta^\bpi(z) G_{i,z}$. The contribution of some $\theta_{s,a}$ to the gradient is:
\begin{eqnarray*}
\frac{\partial J^{PG}(\pi_\theta)}{\partial \theta_{s,a}}& = &
\frac{\partial }{\partial \theta_{s,a}} \sum_{z \in \cZ} \eta^\bpi(z) G_{i,z} \\
 & = & \frac{\partial }{\partial \theta_{s,a}} \left( \sum_{h,z \in \cZ(s,a)} \eta^\bpi(z) G_{i,z} \right)~~~~~~\text{since other terminal histories do not contain $\theta_{s,a}$}\\
  & = & \sum_{h,z \in \cZ(s,a)}{\frac{\partial }{\partial \theta_{s,a}} \eta^\bpi(z) G_{i,z}} \\
  & = & \sum_{h,z \in \cZ(s,a)}{\eta^\bpi(h) \eta^\bpi(ha, z) G_{i,z}}\\
  & := & v_{\eta,\pi}(s,a)~~~~\text{(definition)}
\end{eqnarray*}

Here, we define $v_{\eta,\pi}(s,a)$ as the (reach-weighted) portion of the overall expected value contributed by action $a$ at information state $s$.
The weight $\sum_{h,z \in \cZ(s,a)}{\eta^\bpi(h)}$ is analogous to the on-policy distribution $\mu$ in the standard policy gradient theorem.
Each component of the gradient treats the other $\theta_{s',a'}$ as constant, increasing the local expected value contributed at $s$, which is just $\pi(s,\ba) \cdot v_{\eta,\pi}(s, \ba)$, can be optimized independently for the purposes of taking a single gradient step. The result is that the problem can be decomposed into a per-state optimization problem, for the purposes of a single policy gradient update. This is a direct consequence of the tabular representation and perfect recall.

We then observe that ACPI can implement some form of Generalized Infinitesimal Gradient Ascent (GIGA) algorithm of Zinkevich~\cite{Zinkevich03Online}, an application of greedy projection or now also called online gradient descent~\cite{Hazan15OCO}.
The idea here is that there is an online convex program with a convex cost function $c_k$ at each step $k$. The optimization proceeds by moving the point $x$ (\ie the policy) following the gradient of $c_k$ at at step $k$ and projecting back into the feasible set (of simplices) greedily after each gradient step.
GIGA is an application of online gradient descent to repeated games;
Zinkevich shows that GIGA minimizes regret against this adversary, defining a new OCP after each play.

In our case, we have {\it local online convex programs} $\textsc{OCP}(s,k)$, one at each $s$, and a separate instantiation of $\textsc{GIGA}(s)$ at $s$ that solves this local OCP at each $s$.
Each problem is locally convex, and the adversary is the policy $\pi_{-i}$ for states outside of $s$. We use this as a basis to prove Theorem~\ref{thm:pg-cfr-conv}. 

Our construction essentially shows that ACPI is similar to CFR  except with the policy update rule (RegretMatching on Algorithm~\ref{alg:cfr}, line~\ref{alg:cfr-rm}) replaced by GIGA($s$).
We then show that PGPI can be treated as a special case of this overall argument.

\begin{definition}
\label{def:giga-s}
State-local Generalized Infinitesimal Gradient Ascent ($\textsc{GIGA}(s)$) (an adaptation of \cite[Algorithm 2]{Zinkevich03Online} proceeds as follows.
Let $\btheta_s = ( \theta_{s,a'}$ for $a' \in \cA$ ) be the policy parameters at $s$.
Initialize $\btheta_s$ arbitrarily according to simplex constraints.
Choose a sequence of learning rates $\{ \alpha_1, \alpha_2, \cdots, \alpha_K \} $, and repeat for $k \in \{ 1, 2, \cdots, K \}$:
\begin{enumerate}
\item Choose $a$ according to $\pi^k(s)$
\item Observe $\pi_{-i}$, and update the local policy $\pi^k(s)$:
    \begin{enumerate}
    \item $y^{k+1} = \btheta_{s} + \alpha_{s,k} \bv_{\eta,\pi}(s,\cdot)$
    \item $\btheta_s = P(y^{t+1})$,
    \end{enumerate}
\end{enumerate}
where the projection $P$ is defined as in Theorem \ref{thm:pg-cfr-conv}.
\end{definition}

Note that $v_{\eta,\pi}(s)$ here is the gradient wrt $\theta_{s,a}$ of the score function from above.
To prove Theorem \ref{thm:pg-cfr-conv}, we will make use of a few lemmas.

\begin{lemma}
\label{lem:score-comp}
The value of the local component $\frac{\partial J^{PG}(\pi_\theta)}{\partial \theta_{s,a}} = v_{\eta,\pi}(s,a) = \eta^\pi_i(s) v_i^c(\pi, s, a)$.
\end{lemma}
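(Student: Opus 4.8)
The plan is to start from the explicit expression for $v_{\eta,\pi}(s,a)$ that was already obtained in the display immediately preceding the lemma, namely $v_{\eta,\pi}(s,a) = \sum_{(h,z) \in \cZ(s,a)} \eta^\bpi(h)\,\eta^\bpi(ha,z)\, G_{i,z}$, and to rewrite the reach-probability weights until the counterfactual value $v_i^c(\pi,s,a)$ factors out. The whole argument is reach-probability bookkeeping; the one substantive ingredient is perfect recall, which guarantees that player $i$'s own reach contribution is constant across all histories in a single information state. I would work with the counterfactual value in the form $v_i^c(\pi,s,a) = \sum_{(h,z)\in\cZ(s,a)} \eta^\bpi_{-i}(h)\,\eta^\bpi(ha,z)\, u_i(z)$, i.e.\ the same expression that emerges at the end of the $q_{\bpi,i}$-versus-$v_i^c$ derivation in Section~\ref{sec:rpg-seq}, and use $u_i(z) = G_{i,z}$.

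First I would split each history's reach probability into player $i$'s contribution and the opponents'-plus-chance contribution, $\eta^\bpi(h) = \eta^\bpi_i(h)\,\eta^\bpi_{-i}(h)$. Next, invoking perfect recall, I would replace $\eta^\bpi_i(h)$ by the state-level quantity $\eta^\pi_i(s)$; this is legitimate precisely because $\eta^\pi_i(h) = \eta^\pi_i(h')$ for all $h,h' \in s$, so $\eta^\pi_i(s)$ is well-defined and equal to $\eta^\bpi_i(h)$ for every term in the sum. Since $\eta^\pi_i(s)$ no longer depends on the summation index, it can be pulled outside, giving $v_{\eta,\pi}(s,a) = \eta^\pi_i(s)\sum_{(h,z)\in\cZ(s,a)} \eta^\bpi_{-i}(h)\,\eta^\bpi(ha,z)\, G_{i,z}$. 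The remaining sum is, term for term, exactly the definition of $v_i^c(\pi,s,a)$ (after substituting $G_{i,z}=u_i(z)$), which yields $v_{\eta,\pi}(s,a) = \eta^\pi_i(s)\, v_i^c(\pi,s,a)$ as claimed.

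There is essentially no hard step here, but two bookkeeping points require care. The first is the perfect-recall substitution itself: one must confirm that $\eta^\pi_i$ is genuinely constant on $s$ so that the factor extracts cleanly, which is where the defining property of information states under perfect recall is used. The second is the treatment of the action probability $\pi(s,a)=\theta_{s,a}$: it must be accounted for identically on both sides and must not leave a stray factor. This is automatic, because in forming $v_{\eta,\pi}(s,a)$ the factor $\theta_{s,a}$ in $\eta^\bpi(z)$ was already consumed by the differentiation $\partial/\partial\theta_{s,a}$, so that $\eta^\bpi(ha,z)$ denotes the continuation reach from $ha$ to $z$ \emph{excluding} the $\theta_{s,a}$ factor, matching exactly the $\eta^\bpi(ha,z)$ appearing in $v_i^c(\pi,s,a)$. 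Verifying this alignment is the only place where a sign- or factor-level error could creep in.
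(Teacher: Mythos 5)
Your proposal is correct and follows essentially the same route as the paper's proof: start from the explicit sum $v_{\eta,\pi}(s,a) = \sum_{h,z \in \cZ(s,a)} \eta^\bpi(h)\,\eta^\bpi(ha,z)\,G_{i,z}$, factor $\eta^\bpi(h) = \eta_i^\bpi(h)\eta_{-i}^\bpi(h)$, pull out $\eta_i^\bpi(s)$ by perfect recall, and recognize the remainder as $v_i^c(\pi,s,a)$. Your additional remark that the $\theta_{s,a}$ factor is consumed by the differentiation, so that $\eta^\bpi(ha,z)$ on both sides denotes the continuation reach excluding $\pi(s,a)$, is a correct and worthwhile clarification that the paper leaves implicit.
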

\begin{proof}
\begin{eqnarray*}
v_{\eta,\pi}(s,a) & = & \sum_{h,z \in \cZ(s,a)}{\eta^\bpi(h) \eta^\bpi(ha, z)} G_{i,z}~~~~~~~~~~~~~~\text{as defined above}\\
  & = & \sum_{h,z \in \cZ(s,a)}{\eta_i^{\bpi}(h) \eta_{-i}^{\bpi}(h) \eta^\bpi(ha, z)} G_{i,z} \\
  & = & \eta_i^{\bpi}(s) \sum_{h,z \in \cZ(s,a)}{\eta_{-i}^{\bpi}(h) \eta^\bpi(ha,z)} G_{i,z}~~~~\text{from perfect recall} \\
  & = & \eta_i^{\bpi}(s) v_i^c(\pi, s, a)
\end{eqnarray*}
\end{proof}

%ACPI has the following local update rule:
%$\btheta_s \leftarrow \btheta_s %\alpha_{\textsc{ACPI},s,k} [ \nabla_{\btheta_s} %\pi_{\btheta_s} ] (q_\pi(s, a) - \sum_b \pi(s,b) %q_\pi(s,b))$.
The following lemma shows how the advantage is related to the local value GIGA$(s)$ has in its update rule:

\begin{lemma}
\label{lem:adv}
The advantage
\[
q_\pi(s, a) - \sum_b{\pi(s,b) q_\pi(s,b)} = \frac{v_{\eta,\pi}(s,a)}{\eta^{\pi}_i(s)\cB_{-i}(\pi,s)} - \frac{v_i^c(\pi,s)}{\cB_{-i}(\pi,s)}.
\]
\end{lemma}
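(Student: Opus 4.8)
The plan is to obtain the identity by chaining together three relations already established earlier in the excerpt, so essentially no new computation is required beyond substitution. The key ingredients are: (i) the Bayes-normalization derivation in Section~\ref{sec:rpg-seq}, which shows $q_{\pi,i}(s,a) = v_i^c(\pi,s,a)/\cB_{-i}(\pi,s)$ and, analogously, $v_{\pi,i}(s) = v_i^c(\pi,s)/\cB_{-i}(\pi,s)$, where $\cB_{-i}(\pi,s) = \sum_{h\in s}\eta^\bpi_{-i}(h)$; (ii) the CFR definition $v_i^c(\pi,s) = \sum_b \pi(s,b)\,v_i^c(\pi,s,b)$; and (iii) Lemma~\ref{lem:score-comp}, namely $v_{\eta,\pi}(s,a) = \eta^\pi_i(s)\,v_i^c(\pi,s,a)$.

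First I would rewrite the baseline (second) term on the left-hand side. Applying relation (i) to each $q_\pi(s,b)$ and then relation (ii),
\[
\sum_b \pi(s,b)\,q_\pi(s,b) = \frac{1}{\cB_{-i}(\pi,s)}\sum_b \pi(s,b)\,v_i^c(\pi,s,b) = \frac{v_i^c(\pi,s)}{\cB_{-i}(\pi,s)},
\]
which is exactly the second term of the claimed right-hand side (and, reassuringly, equals $v_{\pi,i}(s)$).

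It then remains to handle the first term. Applying relation (i) to $q_\pi(s,a)$ gives $q_\pi(s,a) = v_i^c(\pi,s,a)/\cB_{-i}(\pi,s)$, and solving Lemma~\ref{lem:score-comp} for $v_i^c(\pi,s,a) = v_{\eta,\pi}(s,a)/\eta^\pi_i(s)$ (legitimate since $\eta^\pi_i(s) > 0$ under the positive-reach hypothesis $\theta^k_{s,a}>0$ carried throughout) yields $q_\pi(s,a) = v_{\eta,\pi}(s,a)/(\eta^\pi_i(s)\,\cB_{-i}(\pi,s))$. Subtracting the baseline term computed above produces precisely the stated expression. The only thing to watch is the bookkeeping: ensuring the $\eta^\pi_i(s)$ factor attaches to the action-dependent term alone (it does not appear in the baseline, since there the baseline is expressed directly through $v_i^c(\pi,s)$ rather than through the reach-weighted quantity $v_{\eta,\pi}$), and confirming the division by $\eta^\pi_i(s)$ is well-defined. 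No genuinely hard step arises; the entire content is in correctly composing the previously proved conversions.
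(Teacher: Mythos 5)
Your proposal is correct and follows essentially the same route as the paper's proof: both combine the Bayes-normalization identities $q_\pi(s,a) = v_i^c(\pi,s,a)/\cB_{-i}(\pi,s)$ and $v_{\pi,i}(s) = v_i^c(\pi,s)/\cB_{-i}(\pi,s)$ from Section~\ref{sec:rpg-seq} with Lemma~\ref{lem:score-comp} to convert $v_i^c(\pi,s,a)$ into $v_{\eta,\pi}(s,a)/\eta_i^\pi(s)$. The only cosmetic difference is that you expand the baseline term action-by-action while the paper cites the advantage identity wholesale, and you additionally flag the positivity of $\eta_i^\pi(s)$, which the paper leaves implicit.
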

\begin{proof}
\begin{eqnarray*}
q_\pi(s, a) - \sum_b{\pi(s,a) q_\pi(s,b)}  & = & \frac{ v_i^c(\pi, s, a) - v^c_i(\pi, s)}{\cB_{-i}(\pi, s)}~~~~~~~~~~\text{from Section \ref{sec:rpg}}\\
& = & \frac{ v_i^c(\pi, s, a) }{\cB_{-i}(\pi, s)} - \frac{v^c_i(\pi, s)}{\cB_{-i}(\pi, s)}\\
& = & \frac{v_{\eta,\pi}(s,a)}{\eta^{\pi}_i(s) \cB_{-i}(\pi,s)} - \frac{v_i^c(\pi,s)}{\cB_{-i}(\pi,s)}~~~~~~~~~~~~~~~~~~\text{by Lemma~\ref{lem:score-comp}}
\end{eqnarray*}
\end{proof}
We require one more property about projections onto simplices:
\begin{lemma}
\label{lem:projection}
Define the simplex $\Delta = \{ \bx \in \Re^N : \sum_i x_i = 1\}$, and for some $\by \in \Re^N$ the $\ell_2$ projection $P(\by) = \argmin_{\bx \in \Delta} \Vert \bx - \by \Vert_2$. If $k$ is any real constant, then 
\[
P(\by - k\bone) = P(\by).
\]
\end{lemma}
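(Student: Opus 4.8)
The plan is to exploit the fact that $\Delta$ as defined here is not the probability simplex but the affine hyperplane $\{\bx \in \Re^N : \bone^{\top}\bx = 1\}$, whose (unique up to scaling) normal direction is $\bone$. Since $-k\bone$ displaces $\by$ purely along this normal direction, and orthogonal projection onto an affine subspace annihilates exactly the normal component of any displacement, the projection should be left unchanged. I would make this precise in one of two equivalent ways.

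First approach (closed form). I would solve the projection problem explicitly. Minimizing $\Vert \bx - \by \Vert_2^2$ subject to $\bone^{\top}\bx = 1$ is a single-constraint quadratic program; writing the Lagrangian and setting its gradient to zero yields $\bx = \by + \tfrac{\lambda}{2}\bone$, and enforcing $\bone^{\top}\bx = 1$ gives $\tfrac{\lambda}{2} = (1 - \bone^{\top}\by)/N$. Hence $P(\by) = \by + \tfrac{1 - \bone^{\top}\by}{N}\bone$. Substituting $\by - k\bone$ for $\by$, the inner product $\bone^{\top}(\by - k\bone) = \bone^{\top}\by - kN$ produces an extra $+k\bone$ in the correction term that exactly cancels the $-k\bone$ carried along from the argument, leaving $P(\by - k\bone) = \by + \tfrac{1 - \bone^{\top}\by}{N}\bone = P(\by)$.

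Second approach (geometric). Alternatively, I would argue via orthogonality without computing the formula: for any $\bx \in \Delta$ the vector $\bx - P(\by)$ lies in the linear subspace parallel to $\Delta$ and is therefore orthogonal to $\bone$, so the Pythagorean identity $\Vert \bx - \by \Vert_2^2 = \Vert \bx - P(\by)\Vert_2^2 + \Vert P(\by) - \by\Vert_2^2$ holds for every feasible $\bx$. This shows the minimizer depends on $\by$ only through its component within the hyperplane. Replacing $\by$ by $\by - k\bone$ alters only the component along $\bone$, hence leaves the minimizer fixed.

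There is essentially no real obstacle here; the one point requiring care is recognizing that the constraint set is the affine hyperplane (so that $\bone$ is genuinely its normal) and that this is precisely what makes the shift invisible to the projection. I would also remark that the same invariance persists even if one intersects $\Delta$ with the nonnegative orthant, since projection onto any convex subset of the hyperplane factors through $P$ by the identical orthogonality argument; but for the uses in Theorem~\ref{thm:pg-cfr-conv} the affine version stated here suffices.
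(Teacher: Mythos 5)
Your proof is correct, and it reaches the conclusion by a different route than the paper. The paper's proof is a direct algebraic expansion: it writes $\Vert \bx - (\by - k\bone)\Vert_2$ as $\sqrt{\sum_i (x_i + k - y_i)^2}$, multiplies out, and uses the constraint $\sum_i x_i = 1$ to show that every $k$-dependent term ($Nk^2 + 2k - 2k\sum_i y_i$) is a constant independent of $\bx$, so the minimizer is unchanged. Your two arguments are more structural: the Lagrangian computation produces the explicit formula $P(\by) = \by + \frac{1-\bone^{\top}\by}{N}\bone$ from which the invariance is a one-line substitution, and the orthogonality argument identifies the real reason the lemma holds --- $\bone$ is the normal of the affine hyperplane $\Delta$, so a shift along $\bone$ is invisible to the projection. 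The paper's expansion is the "elementary, self-contained" version of exactly this orthogonality fact (the vanishing of the cross term $2k\sum_i x_i - 2k\sum_i y_i$ up to a constant is the statement that $\bx - \bx'$ is orthogonal to $\bone$ for $\bx, \bx' \in \Delta$), so the underlying mechanism is the same even though the write-ups differ. Your closing remark is also correct and worth keeping in mind: because projection onto any convex subset of the hyperplane factors through the hyperplane projection by the same Pythagorean identity, the invariance would survive adding nonnegativity constraints; the paper's $\Delta(\cS,\cA)$ in Theorem~\ref{thm:pg-cfr-conv} imposes only the sum constraint (positivity of $\theta^k_{s,a}$ is assumed separately), so the affine version you prove is precisely what is needed there.
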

\begin{proof}
\begin{eqnarray*}
P(\by - k\bone) & = & \argmin_{\bx \in \Delta} \Vert \bx - (\by - k \bone) \Vert_2 \\
 & = & \argmin_{\bx \in \Delta} \Vert \bx + k \bone - \by \Vert_2 \\ 
 & = & \argmin_{\bx \in \Delta} \sqrt{\sum_i^N (x_i + k - y_i)^2} \\
  & = & \argmin_{\bx \in \Delta} \sqrt{\sum_i^N (x_i^2 - 2 x_i y_i + y_i^2 + k^2 + 2k x_i - 2k y_i)} \\
  & = & \argmin_{\bx \in \Delta} \sqrt{\sum_i^N ((x_i - y_i)^2 + k^2 + 2k x_i - 2k y_i)} \\
  & = & \argmin_{\bx \in \Delta} \sqrt{\sum_i^N (x_i - y_i)^2 + \sum_i^N k^2 + \sum_i^N 2k x_i - \sum_i^N 2k y_i} \\
  & = & \argmin_{\bx \in \Delta} \sqrt{\sum_i^N (x_i - y_i)^2 + Nk^2 + 2k - 2k \sum_i^N y_i}~~~~~~~~\text{since $\bx \in \Delta$} \\
  & = & \argmin_{\bx \in \Delta} \sqrt{\sum_i^N (x_i - y_i)^2}. 
\end{eqnarray*}
The last line follows because $\sum_i^N y_i$ is constant when minimizing over $\bx$, and the functions $\sqrt{f(x)}$ and $\sqrt{f(x) + c}$, for some constant $c$, are minimized at the same point.
\end{proof}

We can now relate ACPI and GIGA$(s)$ using the lemmas above.
\begin{lemma}
\label{lem:acpi-giga}
Running projected ACPI (or projected PGPI) with learning rate $\alpha_{s,k} = k^{-\frac{1}{2}}$
is equivalent to running GIGA$(s)$ at each state $s$ with its required learning rate of $k^{-\frac{1}{2}}$; as a result, the total (local) regret defined using the values $v_{\eta,\pi}$ at each state $s$ after $K$ steps is at most $\sqrt{K} + \left( \sqrt{K} - \frac{1}{2} \right) | \cA | (\Delta r)^2$, where $\Delta r = \max r_i - \min r_i$.
\end{lemma}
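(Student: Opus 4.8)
The plan is to prove the statement in two stages: first show that the projected PGPI and ACPI updates are both literally instances of $\textsc{GIGA}(s)$, and then invoke Zinkevich's regret bound for greedy projection. For the equivalence, the score-function derivation preceding Lemma~\ref{lem:score-comp} already identifies the $(s,a)$ component of the PGPI gradient as $\partial J^{PG}(\pi_\theta)/\partial \theta_{s,a} = v_{\eta,\pi}(s,a)$, so the projected PGPI update $\theta^{k+1}_{s,\cdot} \leftarrow P(\{ \theta^k_{s,a} + \alpha_{s,k} v_{\eta,\pi^k}(s,a) \}_a)$ is verbatim step 2 of $\textsc{GIGA}(s)$ in Definition~\ref{def:giga-s}, with the required rate $\alpha_{s,k} = k^{-\frac{1}{2}}$.

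For ACPI I would show that its $(s,a)$ gradient component differs from the PGPI one only by an action-independent constant. Using the tabular parameterization $\pi(s,a) = \theta_{s,a}$ and the reach weight $\eta_i^\pi(s)\cB_{-i}(\pi,s)$ that plays the role of the on-policy distribution $\mu(s)$, the advantage identity of Lemma~\ref{lem:adv} gives
\[
\frac{\partial J^{AC}(\pi_\theta)}{\partial \theta_{s,a}} = \eta_i^\pi(s)\cB_{-i}(\pi,s)\left(q_\pi(s,a) - \sum_b \pi(s,b) q_\pi(s,b)\right) = v_{\eta,\pi}(s,a) - \eta_i^\pi(s) v_i^c(\pi,s),
\]
where the subtracted term $\eta_i^\pi(s) v_i^c(\pi,s)$ is constant in $a$. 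Writing this constant as $c$, the projected ACPI step equals $P(\{ \theta^k_{s,a} + \alpha_{s,k}(v_{\eta,\pi^k}(s,a) - c) \}_a) = P(\{ \theta^k_{s,a} + \alpha_{s,k} v_{\eta,\pi^k}(s,a) \}_a - \alpha_{s,k} c\, \bone)$, which by the projection-invariance Lemma~\ref{lem:projection} is exactly the PGPI/$\textsc{GIGA}(s)$ update. This establishes the equivalence for both variants. I would then apply Zinkevich's greedy-projection guarantee~\cite[Theorem 1]{Zinkevich03Online} to the local online convex program whose step-$k$ linear payoff is $v_{\eta,\pi^k}(s,\cdot)$: for $\alpha_k = k^{-\frac{1}{2}}$ it bounds the regret against the best fixed point in $\Delta$ by $\frac{\Vert F \Vert^2}{2}\sqrt{K} + (\sqrt{K} - \frac{1}{2})\Vert \nabla c \Vert^2$. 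The feasible set is the probability simplex over $\cA$, whose diameter is the vertex-to-vertex distance, so $\Vert F \Vert^2 = 2$ and the first term is exactly $\sqrt{K}$. For the gradient, each component $v_{\eta,\pi}(s,a) = \sum_{h,z \in \cZ(s,a)} \eta^\pi(h)\eta^\pi(ha,z) G_{i,z}$ is a sum of returns weighted by nonnegative reach products that total $\eta_i^\pi(s)\cB_{-i}(\pi,s) \le 1$; since every return lies in a window of width $\Delta r$, we get $|v_{\eta,\pi}(s,a)| \le \Delta r$ and hence $\Vert \nabla c \Vert^2 = \sum_a v_{\eta,\pi}(s,a)^2 \le |\cA|(\Delta r)^2$. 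Substituting (and using $\sum_{k=1}^K k^{-1/2} \le 2\sqrt{K} - 1$) yields the claimed bound $\sqrt{K} + (\sqrt{K} - \frac{1}{2})|\cA|(\Delta r)^2$.

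The main obstacle I anticipate is the second half of the equivalence argument: verifying that the ACPI gradient genuinely decomposes as the PGPI gradient plus an action-constant term. This hinges on correctly pinning down $\eta_i^\pi(s)\cB_{-i}(\pi,s)$ as the analogue of the on-policy distribution in the tabular behavioral-strategy representation and then leaning on Lemma~\ref{lem:projection} to discard the baseline. A secondary subtlety is the gradient-norm estimate: I must confirm that reach weights summing to at most one together with the bounded return range $\Delta r$ give the per-component bound $|v_{\eta,\pi}(s,a)| \le \Delta r$ exactly, rather than a looser constant, since this is precisely what reproduces the coefficient of the second term in Zinkevich's bound.
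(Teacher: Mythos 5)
Your proposal is correct and follows essentially the same route as the paper's proof: identify the per-$(s,a)$ gradient components, observe that the ACPI update differs from the PGPI/$\textsc{GIGA}(s)$ update only by an action-independent shift that Lemma~\ref{lem:projection} renders invisible under the simplex projection, and then invoke Zinkevich's greedy-projection regret bound. The only difference is cosmetic --- you treat PGPI before ACPI and explicitly verify the constants $\Vert F \Vert^2 = 2$ and $\Vert \nabla c \Vert^2 \le |\cA|(\Delta r)^2$, details the paper delegates to the citation.
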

\begin{proof}
We first prove the statement for ACPI.
If we rewrite the ACPI update equations statement of the theorem using the gradient as derived at the start of this subsection, after cancelling terms we get:
\[
\theta_{s,\cdot} \leftarrow P( \{ \theta_{s,a} + k^{-\frac{1}{2}} v_{\eta,\pi}(s,a) - k^{-\frac{1}{2}} v_{\eta,\pi}(s) \}_a )
\]
Notice that the right-most term is a constant value added to all the components of the vector $\btheta_s$.
These constants shift each component of the vector $\btheta_s$ by the same amount, which by Lemma~\ref{lem:projection} does not affect the resulting local policy $\pi^{k+1}(s)$ when projected back onto the simplex, leaving an equivalent update to the one in Definition~\ref{def:giga-s}. The total regret is then obtained by~\cite[Theorem 4]{Zinkevich03OnlineTR}.

To prove the statement is true for PGPI, we simply need to retrace our steps. 
When we rewrite the PGPI update equations, we arrive at a similar update, except missing the constant shift:
\[
\theta_{s,\cdot} \leftarrow P( \{ \theta_{s,a} + k^{-\frac{1}{2}} v_{\eta,\pi}(s,a) \}_a ).
\]
and the same logic holds as before without the constant shift over all actions $a$ at $s$.
\end{proof}

We are now ready to prove Theorem~\ref{thm:pg-cfr-conv}.

\subsection{Proof of Theorem \ref{thm:pg-cfr-conv}}
\label{sec:thm1-proof}

As a result of Lemma~\ref{lem:acpi-giga}, the regret is minimized locally. Formally, for $K$ steps the total local regret for playing the sequence of policies $\pi^0(s), \pi^1(s), \cdots, \pi^K(s)$ at $s$ is:
\[
R_i^k(s) =  \max_{a \in \cA} \sum_{k=1}^K v_{\eta,\pi}(s,a) - \bpi^k(s) \cdot \bv_{\eta,\pi^k}(s, \ba) \le \sqrt{K} + ( \sqrt{K} - \frac{1}{2})|\cA| (\Delta r)^2
\]
That is, the total regret over $k$ steps is sublinear in $K$, so the average regret locally at each $s$ approaches 0 as $k \rightarrow \infty$.

Using Lemma~\ref{lem:score-comp}, and noticing that the second term is a dot product over a vector whose components are $v_{\eta,\pi^k}(s,a)$ for each $a \in \cA$ at $s$, we can rewrite the above in terms of counterfactual values:
\[
\max_{a \in \cA} \sum_{k=1}^K \eta_i^{\pi^k}(s) v_i^c(\pi^k, s, a) - \eta_i^{\pi^k}(s) v_i^c(\pi^k,s) \le \sqrt{K} + (\sqrt{K} - \frac{1}{2})|\cA| (\Delta r)^2,
\]

Let $\eta_i^{\min} = \min_k \eta_i^{\pi^k}(s)$. If we divide both sides by this value to get an expression in terms of counterfactual values only:
\[
\max_{a \in \cA} \sum_{k=1}^K v_i^c(\pi^k, s, a) - v_i^c(\pi^k,s) \le \frac{1}{\eta_i^{\min}} \left( \sqrt{K} + (\sqrt{K} - \frac{1}{2})|\cA| (\Delta r)^2 + O(K) \right).
\]
However, this adds a term $O(K)$ since the counterfactual regrets can be negative, which means the average regret may not be guaranteed to reduce to zero asymptotically as $K \rightarrow \infty$.
This last line highlights the main difference between counterfactual regret minimization and ACPI / PGPI: the regret being minimized via GIGA$(s)$ is with
respect to a reach-weighted value, which prevents the application of the CFR theorem.
We discuss this further in Appendix~\ref{app:errata}.

\subsection{Proof of Theorem \ref{thm:strong-acpi}}

The original optimization problem concentrates on ascending the score function $J(\bpi_{\theta}) = v_{\bpi}(s_0)$. The change in policy at every state is focused on increasing only the value of the initial state, which leads to the changes at each state to be weighted by their reach probability. Hence this update is a kind of incremental policy improvement with small/careful improvement steps, which is closer in spirit to Conservative Policy Iteration~\cite{Kakade02CPI} and Maximum aposteriori Policy Optimisation~\cite{MPO}. This is in contrast to other tabular methods that perform a greedy policy improvement steps at every state.

Instead, strong ACPI changes the optimization problem to modify the policy in the direction of ascent at all the state-local values simultaneously, which is closer in form to assigning (or moving toward)  $\pi(s) = \argmax_a q(s,a)$ at every state $s$ as is done in value-based methods. 
%This results in a gradient that removes the weighting by the on-policy distribution, and so:
%\[
%\nabla_{\btheta} J^{SPG}(\bpi_{\btheta}, s) \propto \sum_{s' \in \textsc{Reachable}(s)} \sum_a \nabla_{\btheta} \pi_{\theta}(s',a) %q_{\pi}(s',a), \text{~~~and}
%\]
%\[
%\nabla_{\btheta} J^{SAC}(\bpi_{\btheta}, s) \propto \sum_{s' \in \textsc{Reachable}(s)} \sum_a \nabla_{\btheta} \pi_{\theta}(s',a) %(q_{\pi}(s',a) - v_{\pi}(s')),}
%\]
%where $\textsc{Reachable}(s)$ is the set of reachable states starting from state $s$, including $s$.
This changes the policy updates, so we now re-derive the gradient components from the start of this section, but using state-local gradients at each state $s$. 

Here, $J^{PG}(\pi_{\btheta}, s) = \bE_{\rho \sim \bpi}[ G_t~|~S_t = s ]$. The contribution of some $\theta_{s,a}$ to the gradient is:
\begin{eqnarray*}
\frac{\partial J^{PG}(\pi_\theta, s)}{\partial \theta_{s,a}}& = &\frac{\partial }{\partial \theta_{s,a}} \left( \sum_{h,z \in \cZ(s,a)} \Pr(h~|~s) \eta^{\pi}(h,z) u_i(z) \right)\\
  & = & \frac{\partial }{\partial \theta_{s,a}} \sum_{h,z \in \cZ(s,a)} \frac{\eta_{-i}^{\pi}(h)}{\cB_{-i}(\pi, s)} \eta^{\pi}(h,z) u_i(z)~~~~~~~~~~~~~~\text{from Section~\ref{sec:rpg-seq}}\\
  & = & \sum_{h,z \in \cZ(s,a)} \frac{\eta_{-i}^{\pi}(h)}{\cB_{-i}(\pi, s)} \eta^{\pi}(ha,z) u_i(z).\\
\end{eqnarray*}
which differs from $v_{\eta,\pi}(s,a)$ by replacing the weight $\eta^\pi(h)$ by $\frac{\eta_{-i}^{\pi}(h)}{\cB_{-i}(\pi, s)}$ inside the sum.

It is now easy to verify that this value is more desirable than $v_{\eta,\pi}(s,a)$ by looking again at Lemma \ref{lem:score-comp} and using this new definition of value. Continuing from above, 
\begin{eqnarray*}
\frac{\partial J^{PG}(\pi_\theta, s)}{\partial \theta_{s,a}}& = & \ldots\\
  & = & \sum_{h,z \in \cZ(s,a)}  \frac{\eta_{-i}^{\pi}(h)}{\cB_{-i}(\pi, s)} \eta^{\pi}(ha,z) u_i(z)\\
  & = & \frac{1}{\cB_{-i}(\pi, s)} \sum_{h,z \in \cZ(s,a)} \eta_{-i}^{\pi}(h) \eta^{\pi}(ha,z) u_i(z) \\
  & = & \frac{v_i^c(\pi, s, a)}{\cB_{-i}(\pi, s)}~~~~~~~~~~~~~~\text{(which is an analog to Lemma~\ref{lem:score-comp})}.
\end{eqnarray*}
The advantage works out to be: $q_\pi(s, a) - \sum_b{\pi(s,b) q_\pi(s,b)}$
\begin{eqnarray*}
& = & \frac{ v_i^c(\pi, s, a) - v^c_i(\pi, s)}{\cB_{-i}(\pi, s)}~~~~~~~~~~\text{from Section \ref{sec:rpg}}\\
& = & \frac{ v_i^c(\pi, s, a) }{\cB_{-i}(\pi, s)} - \frac{v^c_i(\pi, s)}{\cB_{-i}(\pi, s)}\\
& = & \frac{ \partial }{\partial \theta_{s,a}} J^{PG}(\pi_{\theta},s) - \sum_b \pi(s,b) \frac{ \partial }{\partial \theta_{s,b}} J^{PG}(\pi_{\theta},s)~~~~~~~~~~\text{by the derivation above.}
\end{eqnarray*}
Now, when GIGA$(s)$ uses this new value, we can state a lemma that analogous to Lemma \ref{lem:acpi-giga}:
\begin{lemma}
\label{lem:sacpi-giga}
Running SACPI (or SPGPI) with learning rate $\alpha_{k} = k^{-\frac{1}{2}}$ is equivalent to running GIGA$(s)$ at each state $s$ with its required learning rate of $k^{-\frac{1}{2}}$; as a result, the total (local) regret defined using the values $v_{\eta, \pi}$ at each state $s$ after $K$ steps is at most $\sqrt{K} + \left( \sqrt{K} - \frac{1}{2} \right) | \cA | (\Delta r)^2$, where $\Delta r = \max r_i - \min r_i$.
\end{lemma}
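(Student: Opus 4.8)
The plan is to re-run the argument of Lemma~\ref{lem:acpi-giga} verbatim in structure, substituting the state-local gradient derived just above for the global one. I would first treat SACPI. Writing out the projected SACPI update on iteration $k$ at state $s$ and inserting $\frac{\partial}{\partial \theta_{s,a}} J^{AC}(\pi_{\btheta},s) = \frac{v_i^c(\pi^k,s,a) - v_i^c(\pi^k,s)}{\cB_{-i}(\pi^k,s)}$ (the advantage identity established just above the lemma) gives
\[
\theta^{k+1}_{s,\cdot} \leftarrow P\left( \left\{ \theta^k_{s,a} + k^{-\frac{1}{2}} \frac{v_i^c(\pi^k,s,a)}{\cB_{-i}(\pi^k,s)} - k^{-\frac{1}{2}} \frac{v_i^c(\pi^k,s)}{\cB_{-i}(\pi^k,s)} \right\}_a \right).
\]
The rightmost summand does not depend on $a$, so it is a uniform shift of $\btheta_s$ by a constant multiple of $\bone$; by Lemma~\ref{lem:projection} it leaves the projected policy unchanged. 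Hence SACPI collapses onto the SPGPI update $\theta^{k+1}_{s,\cdot} \leftarrow P(\{ \theta^k_{s,a} + k^{-\frac{1}{2}} q_{\pi^k,i}(s,a) \}_a)$, where $q_{\pi^k,i}(s,a) = v_i^c(\pi^k,s,a)/\cB_{-i}(\pi^k,s)$ by Section~\ref{sec:rpg-seq}. This simultaneously proves the SACPI/SPGPI equivalence and identifies the common update as an instance of GIGA$(s)$ (Definition~\ref{def:giga-s}) driven by the reach-normalized value vector $q_{\pi^k,i}(s,\cdot)$ in place of $v_{\eta,\pi}(s,\cdot)$, with the required learning rate $\alpha_k = k^{-\frac{1}{2}}$.

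For the regret bound I would then invoke Zinkevich's online-gradient-descent guarantee \cite[Theorem 4]{Zinkevich03OnlineTR}, exactly as in Lemma~\ref{lem:acpi-giga}: with $\alpha_k = k^{-\frac{1}{2}}$ over the simplex (whose $\ell_2$-diameter squared is $2$), the local regret is at most $\sqrt{K} + (\sqrt{K} - \frac{1}{2}) \|g\|_{\max}^2$, which becomes $\sqrt{K} + (\sqrt{K} - \frac{1}{2})|\cA|(\Delta r)^2$ once each gradient vector has squared norm bounded by $|\cA|(\Delta r)^2$.

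The step I expect to be the main obstacle is pinning down that bound on the gradient magnitude, since it fixes the constants. Here the ``strong'' version is in fact cleaner than Lemma~\ref{lem:acpi-giga}: the value driving GIGA$(s)$ is the genuine expected return $q_{\pi^k,i}(s,a) \in [\min r_i, \max r_i]$, so each component is bounded by $\Delta r$ in magnitude and $\|g\|^2 \le |\cA|(\Delta r)^2$ follows immediately, with no reach-probability weight to strip off. This is exactly the contrast with $v_{\eta,\pi}(s,a) = \eta_i^\pi(s) v_i^c(\pi,s,a)$ from Lemma~\ref{lem:acpi-giga}, whose conversion to a counterfactual-regret statement only goes through after dividing by $\eta_i^{\min}$. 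I would finally record the bookkeeping caveat that the per-state regret here is measured against the local values actually used in the update, so that aggregating over $s \in \cS_i$ in the proof of Theorem~\ref{thm:strong-acpi} yields the advertised $\frac{1}{\eta_i^{\min}}$-free external-regret bound.
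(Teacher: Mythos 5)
Your proposal is correct and matches the paper's approach: the paper's own ``proof'' of this lemma is literally the single sentence that it follows the same logic as Lemma~\ref{lem:acpi-giga}, and your writeup is a faithful expansion of exactly that argument (constant baseline shift killed by Lemma~\ref{lem:projection}, reduction to GIGA$(s)$ driven by the $\cB_{-i}$-normalized counterfactual values, then Zinkevich's bound). Your observation that the local regret is measured against $v_i^c(\pi,s,\cdot)/\cB_{-i}(\pi,s)$ rather than $v_{\eta,\pi}$ is also right, and is consistent with how the lemma is actually used in the proof of Theorem~\ref{thm:strong-acpi}, even though the lemma statement still (apparently erroneously) names $v_{\eta,\pi}$.
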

The proof follows the same logic as in the proof of Lemma \ref{lem:acpi-giga}.

The proof of Theorem \ref{thm:strong-acpi} then follows very closely the steps of proof of Theorem \ref{thm:pg-cfr-conv}, but instead of the counterfactual values being weighted by $\eta_i^{\pi^k}(s)$, they are instead weighted by $\frac{1}{\cB_{-i}(\pi^k, s)}$, yielding the expression:
\[
\max_{a \in \cA} \sum_{k=1}^K \frac{1}{\cB_{-i}(\pi^k, s)} \left( v_i^c(\pi^k, s, a) - v_i^c(\pi^k,s) \right) \le \sqrt{K} + (\sqrt{K} - \frac{1}{2})|\cA| (\Delta r)^2.
\]
This has a similar problem to the proof of Theorem~\ref{thm:pg-cfr-conv}. The $\frac{1}{\cB_{-i}(\pi^k, s)}$ term cannot be removed here to get
an expression as a sum over only counterfactual regrets, despite the fact that $\frac{1}{\cB_{-i}(\pi^k, s)} \ge 1$. These coefficients scale the magnitudes
of the regrets, so in the worst-case the coefficients for the negative regrets in the sequence could be high ($\gg 1$), and low $(\approx 1)$ for the positive ones.

\section{On the similarity of QPG and RPG \label{sec:app-qpg-rpg}}

As discussed in Section~\ref{sec:rpg-seq} and in \cite[Chapter 13]{Sutton18}, subtracting the baseline does not affect the gradient. Therefore, the QPG gradient at state $s$ can be written as:
\[
\nabla_{\btheta}^{\textsc{QPG}}(s) = \nabla_{\btheta} \left( \sum_a \pi_{\btheta}(s,a) q(s, a; \bw) \right) = \sum_a \nabla_{\btheta} \pi_{\btheta}(s,a; \btheta) q(s, a; \bw).
\]

The RPG gradient $\nabla_{\btheta}^{\textsc{RPG}}(s) = - \nabla_{\btheta} \sum_a \left( q(s,a; \bw) - \sum_b \pi(s,b; \btheta) q(s,b; \bw) \right)^+$
\begin{eqnarray*}
& = & \sum_a \bI \left[ q(s,a; \bw) > \sum_b \pi(s,b; \btheta) q(s,b; \bw) \right] \nabla_{\btheta} \sum_b \pi(s,b; \btheta) q(s,b; \bw)\\
& = & \sum_a \bI \left[ q(s,a; \bw) > \sum_b \pi(s,b; \btheta) q(s,b; \bw) \right]  \sum_b \nabla_{\btheta} \pi(s,b; \btheta) q(s,b; \bw)\\
& = & \sum_a \bI \left[ q(s,a; \bw) > \sum_b \pi(s,b; \btheta) q(s,b; \bw) \right]  \nabla_{\btheta}^{\textsc{QPG}}(s)\\
& = & n_{a+}(s) \nabla_{\btheta}^{\textsc{QPG}}(s),
\end{eqnarray*}
where the first line follows because $\frac{d}{dx}(x)^+ = 0$ for $x < 0$, and
$n_{a+}(s)$ is the number of actions at $s$ with positive advantage. Therefore for any state $s$, the RPG gradient is proportional to the QPG gradient.

In the special case of two-action games, at any state $s$, either the advantages are both 0 or there is one negative-advantage action and one positive-advantage action, so $\nabla_{\btheta}^{\textsc{RPG}}(s) = \nabla_{\btheta}^{\textsc{QPG}}(s)$.

The similarity between RPG and QPG shown here could be (partly) responsible the similar behavior of QPG and RPG that we observed in our experiments.

\section{Additional details on the experiments \label{sec:app-experiments}}
For the policy gradient algorithm experiments, we tried both Adam and SGD optimizers and found SGD to be better performing. We ran sweeps over learning rates and found 0.01 to be the best performing one. The other hyper parameters in the search included $N_q$ and $batch_size$ and we found that the values of 128 and 4 were the best performing in all the games. The entropy cost was swept in the range [0, 0.2] and the best performing value was found to be 0.1 for all the domains. The experiments were run over 5 different seeds and we noted that the exploitability across all 5 seeds was very close (standard deviation = $\pm$0.02).

The Nq and batch size hyper parameters correspond to the number of q-updates (critic) and batch size used to compute actor and critic updates. Note that the critic ($q$) is updated more times (Nq) in order to perform accurate policy evaluation before performing one policy improvement (actor update). The hyper parameters were swept over a grid: $\{16, 32, 64, 128, 256, 512\}$ for Nq and $\{4, 8, 16, 32\}$ for batch size. The best performing values were Nq = 128
and batch size = 4. We performed the experiments over 5 random seeds and found that the results for the chosen hyper
parameters were very close for all 5 seeds (standard deviation of 0.02). While we only performed a simple grid search, we found
that the chosen hyper parameters worked best across all games.

\section{Other Reductions to Counterfactual Regret Minimization
\label{sec:app-reductions}}

The reduction of ACPI to CFR shown in Section~\ref{sec:rpg-seq} is analogous to the reduction from another existing algorithm, sequence-form replicator dynamics (SFRD)~\cite{Gatti13Efficient}, to CFR~\cite{Lanctot14Further}.

In the common setting of estimating gradients from samples, the algorithm then becomes analogous to the on-policy Monte Carlo sampling case in RL.

There is also a model-free sampled version of SFRD called sequence-form Q-learning (SFQ)~\cite{Panozzo14SFQ}. In SFQ, each step samples a deterministic policy requiring time linear in size of the policy. In contrast, actor-critic algorithms can work directly in the behavioral representation (tables indexed by $s,a$). Also, RL-style function approximation can be easily used in the standard way to generalize over the state space.

\section{Negative Results: Monte Carlo Regression CFR (Retracted Baseline)}

In this section, we describe a baseline that we would have liked to include in the comparison: Monte Carlo RCFR, a version of Regression CFR~\cite{Waugh15solving} built from sampled trajectories. Unfortunately, we were unable to get stable results with this algorithm (with Monte Carlo sampling) and more work needs to be done in order to investigate the cause of the instability.

CFR produces, at each iteration $k$, a joint policy $\bpi_k$, and {\it cumulative regrets}
$\textsc{creg}(K, s, a) = \sum_{k \in \{ 1, \cdots, K} \textsc{reg}(\pi_k, s, a)$, 
and average cumulative regrets $\textsc{acreg}(K, s, a) = \frac{1}{K} \textsc{creg}(K, s, a)$.
CFR uses {\it thresholded} cumulative regrets $\textsc{tcreg}(K, s, a) = \textsc{creg}(K, s, a)^+$
(or $\textsc{tacreg}(K, s, a) = \textsc{acreg}(K, s, a)^+$)
values to determine the next policy $\pi_{k+1}$ at each information state using regret matching.

Regression CFR (RCFR) is a policy iteration algorithm that, like CFR, does full tree passes at each iteration.
However, it uses a regressor to approximate the $\textsc{creg}(K, s, a)$ for all information states.
The policies are still derived from these approximate regrets using regret matching.
The original RCFR used input features and regression trees.
Our implementation uses raw input and neural networks, with the same architectures as for our actor-critic
experiments.
We ran some experiments for our implementation of RCFR, which are shown in Figure~\ref{fig:rcfr}.
We found SGD to be unstable for this problem, and had better results using Adam~\cite{KingmaB14}.

\begin{figure}[h!]
\begin{center}
\includegraphics[scale=0.4]{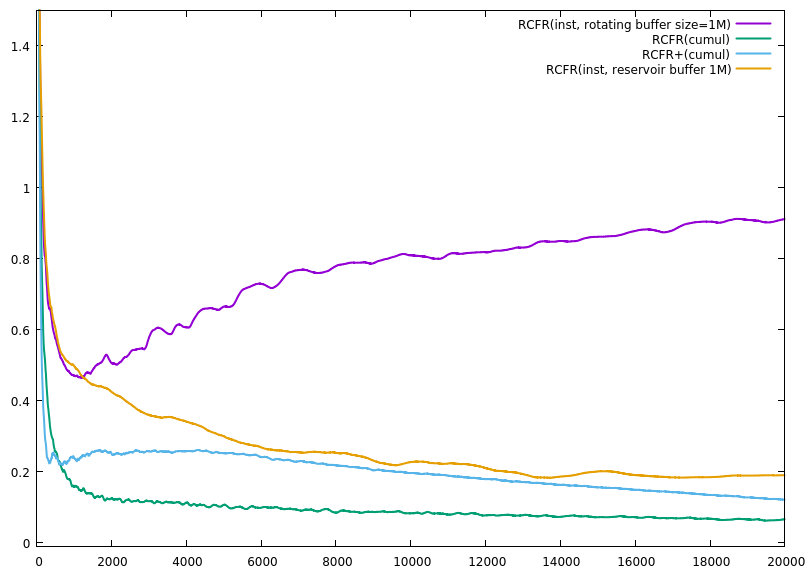}
\end{center}
\caption{Regression CFR convergence results; $x$-axis represents iterations $k$ and $y$-axis represents
$\textsc{NashConv}(\pi_k)$. RCFR(inst, ?) refers to using instantaneous (immediate) regrets $\textsc{reg}$ as the regression targets, whereas RCFR(cumul) refers to the average cumulative regrets $\textsc{acreg}$.
Rotating buffer refers to a circular buffer where new entries delete the oldest entries when the buffer is full.
Reservoir sampling seems to be important when using immediate regret as targets, since it is predicting an
average cumualtive regret. RCFR+ accumulates regret only if the immediate regrets are positive. \label{fig:rcfr}}
\end{figure}

RCFR is not a baseline in our experiments for the same reason CFR is not: it performs policy iteration
which requires free access to the environment, and a full state space sweep per iteration. So, we define a 
sampling version, \defword{Monte Carlo RCFR} (MCRCFR) that is entirely model-free (independent of the environment
and other players' policies). MCRCFR differs from RCFR in the following ways:
\begin{itemize}
\item A value critic is also trained from samples: $v_\pi(s)$. This is done in exactly the same way as in the actor-critic methods.
\item A data set (of size 1 million) is retained to store data.
Each data point stored in this data set is a tuple $(s, R, \widehat{\textsc{reg}}(s,a), a)$"
the input encoding of the information state $s$, the return obtained $R$, the sampled immediate regret
$\widehat{\textsc{reg}}(s,a) = R - v_\pi$, and the action that was chosen $a$.
Reservoir sampling~\cite{Vitter85} is used to replace data in this buffer, so in expectation the data retained in the buffer
is a uniform sample over all the data that was seen.
\item Instead of running a full state space sweep, it samples a trajectory $\rho$ using
an explorative policy $\mu = \epsilon \textsc{Uniform}(\cA(s)) + (1-\epsilon) \pi_i(s)$ at each $s$.
At the beginning, $\epsilon = 1$ and is decayed (multiplied) by 0.995 every 1000 episodes to a minimum of
0.0001.
\item An average policy $\bar{\pi}_i$ is predicted via classification using the actions $a$ that were taken at each $s$ stored in the data set, similar to NFSP. This is the policy we use to assess exploitability.
\end{itemize}

To train, we use Adam~\cite{KingmaB14} a constant learning rate of 0.0001 and batch size of 128. For every 100 sampled episodes, for each network we assembled 10 mini-batches of 128 and run a training step.

MCRCFR seems similar to Advantage Regret Minimization (ARM)~\cite{Jin17ARM}. We outline the differences below:
\begin{itemize}
\item ARM does not maintain a data set: it learns values and (thresholded) advantage values online using moving average of the parameters (whose targets are composed of two separate approximators), but does not predict the average policy.
\item The $q^+_k$ values are bootstrapped from previous values, which is possible when using CFR+. (CFR average cumulative regrets cannot be bootstrapped in this way.) 
\item The $q^+_k$ values predict {\it cumulative sums} rather than average cumulative values (see~\cite[Equation 13]{Jin17ARM}).
\end{itemize}

\section{Corrections to the Original Paper}
\label{app:errata}

The original paper did not include the $O(K)$ terms in statements of Theorem~\ref{thm:pg-cfr-conv} and Theorem~\ref{thm:strong-acpi},
due to concluding that a bound on the weighed regrets implied a bound on the unweighted regrets.
The main problem is that there there is no guarantee about how the negative regrets are weighted compared to the positive regrets.

We attempted to find a fix using stronger guarantees that GIGA has when using linear cost functions, 
such as being no-absolute-regret and no-negative-regret~\cite{kash2019combining,Gofer16} rather than just no-regret, or the fact that the
learning rate is decaying and the algorithm is running in self-play could bound the magnitude of the changes in the policy between steps.
The aim was to see if this could bound the sum of the absolute values of the regrets in a subsequence 
(of either only negative regret or only positive regret).
Despite our best effort, there is a consistent counter-example of a sequence that takes the rough form of a triangular sine wave with decreasing amplitude oscillating
between the positive and negative value extremes; the area under the positive portion (similarly negative portion)-- reflecting the sum of the positive regrets
(and sum of negative regrets, respectively)-- still grows too fast to be bounded by $O(\sqrt{K})$ generally.

We are unaware how often these worst-case sequences occur in practice. An empirical analysis could be an interesting avenue for future work.

\end{document}